\newtheorem{theorem}{Theorem}
\newtheorem{lemma}[theorem]{Lemma}
\newtheorem{definition}[theorem]{Definition}
\newenvironment{proof-sketch}{\noindent{\bf Sketch of Proof}\hspace*{1em}}{\qed\bigskip}
\newenvironment{proof-idea}{\noindent{\bf Proof Idea}\hspace*{1em}}{\qed\bigskip}
\newenvironment{proof-of-lemma}[1]{\noindent{\bf Proof of Lemma #1}\hspace*{1em}}{\qed\bigskip}
\newenvironment{proof-attempt}{\noindent{\bf Proof Attempt}\hspace*{1em}}{\qed\bigskip}
\newcommand{\Reals}{\mathbb{R}}
\newcommand{\Ints}{\mathbb{Z}}
 \gdef\xxxmark{%
   \expandafter\ifx\csname @mpargs\endcsname\relax 
     \expandafter\ifx\csname @captype\endcsname\relax 
       \marginpar{\textcolor{red}{xxx~}}
     \else
       \textcolor{red}{xxx~}
     \fi
   \else
     \textcolor{red}{xxx~}
   \fi}
 \gdef\xxx{\@ifnextchar[\xxx@lab\xxx@nolab}
 \long\gdef\xxx@lab[#1]#2{{\bf [\xxxmark \textcolor{red}{#2} ---{\sc #1}]}}
 \long\gdef\xxx@nolab#1{{\bf [\xxxmark \textcolor{red}{#1}]}}
\DeclareMathOperator*{\argmin}{arg\,min}
\DeclareMathOperator*{\argmax}{arg\,max}
\DeclareMathOperator*{\Exp}{\mathbb{E}}
\newcommand{\hyp}{\mathcal{H}}
\newcommand{\dist}{\mathcal{D}}
\newcommand{\adv}{\text{adv}}
\newcommand{\advens}{\text{adv-ens}}
\newcommand{\advensm}{\text{adv-ens3}}
\newcommand{\uadv}{$_\adv$\xspace}
\newcommand{\uadvens}{$_\advens$\xspace}
\newcommand{\uadvensmulti}{$_\advensm$\xspace}
\newcommand{\train}{\texttt{train}}
\newcommand{\advA}{\mathcal{A}}
\newcommand{\bfs}{\bfseries}
\title{Ensemble Adversarial Training:\newline Attacks and Defenses}
\author{Florian Tramèr\\
	Stanford University\\
	\texttt{tramer@cs.stanford.edu} \\
	\And
	Alexey Kurakin\\
	Google Brain\\
	\texttt{kurakin@google.com}\\
	\And
	Nicolas Papernot\thanks{Part of the work was done while the author was at Google Brain.}\\
	Pennsylvania State University\\
	\texttt{ngp5056@cse.psu.edu}\\
	\And
	Ian Goodfellow\\
	Google Brain\\
	\texttt{goodfellow@google.com}\\
	\And
	Dan Boneh\\
	Stanford University\\
	\texttt{dabo@cs.stanford.edu}\\
	\And
	Patrick McDaniel\\
	Pennsylvania State University\\
	\texttt{mcdaniel@cse.psu.edu}
}
\begin{document}

\maketitle

\begin{abstract}

Adversarial examples are perturbed inputs 
designed to fool machine learning models.
Adversarial training injects such examples into training data to increase robustness. 
To scale this technique to large datasets, perturbations are crafted 
using fast \emph{single-step} methods that maximize a linear approximation of the model's loss.

We show that this form 
of adversarial training converges to a degenerate global minimum, wherein 
small curvature artifacts near the data points obfuscate
a linear approximation of the loss.
The model thus learns to generate weak perturbations, rather than defend 
against strong ones.
As a result, we find that adversarial training
remains vulnerable to black-box attacks, where we \emph{transfer}
perturbations computed on undefended models, 
as well as to a powerful novel single-step attack that 
escapes the non-smooth vicinity of the input data via a small random step.

We further introduce \emph{Ensemble Adversarial Training}, 
a technique that augments training data with perturbations transferred from other models.
On ImageNet, Ensemble Adversarial Training yields models 
with stronger robustness to black-box attacks. In particular, our most robust model won the 
first round of the \emph{NIPS 2017 competition on Defenses against Adversarial Attacks}~\citep{kaggle}. However, subsequent work found that more elaborate black-box attacks could significantly enhance transferability and reduce the accuracy of our models.

\end{abstract}
\section{Introduction}

Machine learning (ML) models are often vulnerable to 
\emph{adversarial examples}, maliciously perturbed inputs
designed to mislead a model at test time~\citep{biggio2013evasion, 
szegedy2013intriguing, 
goodfellow2014explaining, papernot2016limitations}. Furthermore,
\citet{szegedy2013intriguing} showed that these inputs
 \emph{transfer} across models: the same adversarial example
 is often misclassified by different models, thus enabling
 simple \emph{black-box} attacks on deployed 
models~\citep{papernot2016practical, liu2016delving}.

\emph{Adversarial training}~\citep{szegedy2013intriguing} increases 
robustness by augmenting training data with adversarial examples. \citet{madry2017towards} 
showed that adversarially trained models can be made robust to \emph{white-box} attacks (i.e., with knowledge of the model parameters) 
if the perturbations computed during training closely maximize the model's loss. However, 
prior attempts at scaling this approach to ImageNet-scale tasks~\citep{imagenet} have proven unsuccessful~\citep{kurakin2016scale}.

It is thus natural to ask whether it is possible, at scale, to achieve robustness against the class of \emph{black-box} adversaries
Towards this goal, \citet{kurakin2016scale} adversarially trained an Inception v3 model~\citep{szegedy2016rethinking} on ImageNet using a ``single-step'' attack based on a 
linearization of the model's loss~\citep{goodfellow2014explaining}.
Their trained model is robust to single-step perturbations but remains vulnerable to more costly ``multi-step'' attacks. Yet, \citet{kurakin2016scale} found that these attacks fail to reliably transfer between models, and thus concluded that the robustness of their model should extend to black-box adversaries.
Surprisingly, we show that this is not the case.

We demonstrate, formally and empirically, that \emph{adversarial training with single-step methods admits a degenerate global minimum}, wherein the model's loss can not be reliably approximated by a linear function. Specifically, we find that the model's decision surface exhibits sharp curvature near the data points, thus degrading attacks based on a single gradient computation. In addition to the model of~\citet{kurakin2016scale}, we reveal similar overfitting in an adversarially trained Inception ResNet v2 model~\citep{szegedy2016inception}, and a variety of models trained on MNIST~\citep{lecun1998gradient}. 

We harness this result in two ways.
First, we show that adversarially trained models using single-step methods 
remain vulnerable to simple attacks. 
For black-box adversaries, we find that perturbations crafted on 
an undefended model often transfer to an adversarially trained one.
We also introduce a simple yet powerful single-step 
attack, which we call R+FGSM, that applies a small \emph{random} perturbation---to escape the non-smooth vicinity of the data 
point---before linearizing the model's loss.
While seemingly weaker than the 
Fast Gradient Sign Method of~\citet{goodfellow2014explaining}, our attack  
significantly outperforms it for a same perturbation norm, for 
models trained \emph{with or without adversarial training}.
 
Second, we propose \emph{Ensemble Adversarial Training}, a 
training methodology that incorporates perturbed 
inputs transferred from other \emph{pre-trained models}.
Our approach \emph{decouples} adversarial 
example generation from the parameters of the trained model, and
increases the \emph{diversity} of perturbations seen during training. 
We train Inception v3 and Inception ResNet v2 models
on ImageNet that exhibit increased robustness to adversarial examples transferred from other holdout models, using various single-step and multi-step attacks~\citep{goodfellow2014explaining, carlini2016towards, kurakin2016adversarial, madry2017towards}.
We also show that our methods globally reduce the \emph{dimensionality} of the 
space of adversarial examples~\citep{tramer2017space}.
Our Inception ResNet v2 model won the first round of the NIPS 2017 competition 
on Defenses Against Adversarial Attacks~\citep{kaggle}, where it was evaluated 
on other competitors' attacks in a black-box setting.\footnote{We publicly released our model after the first round, and it could thereafter be targeted using white-box attacks. Nevertheless, a majority of the top submissions in the final round, e.g.~\citep{xie2018mitigating} built upon our released model.}

\subsection{Subsequent Work (added April 2020)}

Starting with the NIPS 2017 competition on Defenses Against Adversarial Attacks, many subsequent works have proposed more elaborate black-box transfer-based attacks. By incorporating additional optimization techniques (e.g., momentum~\citep{dong2018boosting}, data-augmentation~\citep{xie2019improving, dong2019evading} or gradients from a residual network's ``skip connections''~\citep{wu2020skip}), these attacks substantially improve the transferability of adversarial examples.
While Ensemble Adversarial Training still improves a model's robustness to these attacks, the best attack to date---by \citet{wu2020skip}---reduces the accuracy of our most robust ImageNet model to $22\%$ (for $\ell_\infty$ perturbations bounded by $\epsilon=16/255$).

We further note that adversarial training with multi-step attacks has now been scaled to ImageNet~\citep{xie2019feature}, resulting in models with plausible and non-negligible robustness to \emph{white-box attacks}, thereby superseding the results obtained with Ensemble Adversarial Training. Adversarial training with multi-step attacks is currently regarded as the state-of-the-art approach for attaining robustness to a fixed type of perturbations, whether in a white-box or black-box setting.

At the same time, a surprising recent result of~\citet{wong2020fast} suggests that with appropriate step-size tuning and early-stopping, adversarial training with the \emph{single-step} R+FGSM attack yields models with \emph{white-box} robustness that is comparable to that obtained with more expensive multi-step attacks~\citep{madry2017towards}.

\section{Related Work}

Various defensive techniques against adversarial examples in deep neural networks have been proposed~\citep{gu2014towards, luo2015foveation, papernot2016distillation, nayebi2017biologically, cisse2017parseval} and many remain vulnerable to adaptive attackers~\citep{carlini2016towards, carlini2017adversarial, baluja2017adversarial}. 
Adversarial training~\citep{szegedy2013intriguing, goodfellow2014explaining, kurakin2016scale, madry2017towards} appears to hold the greatest promise for learning robust models. 

\citet{madry2017towards}
show that adversarial training on MNIST yields models that are robust 
to white-box attacks, if the adversarial examples used in training 
closely maximize the model's loss. Moreover, recent works by~\citet{sinha2018certifiable}, \citet{raghunathan2018certified} and \citet{kolter2017provable} even succeed in providing \emph{certifiable} 
robustness for small perturbations on MNIST.
As we argue in Appendix~\ref{apx:mnist}, the MNIST dataset is peculiar 
in that there exists a simple ``closed-form'' \emph{denoising} 
procedure (namely feature binarization) which leads to similarly 
robust models \emph{without adversarial training}.
This may explain why robustness to 
white-box attacks is hard to scale to tasks 
such as ImageNet~\citep{kurakin2016scale}.
We believe that the existence of a simple robust baseline 
for MNIST can be useful for understanding 
some \emph{limitations} of adversarial training techniques.

\citet{szegedy2013intriguing} found that adversarial examples transfer between models,
thus enabling black-box attacks on deployed models.
\citet{papernot2016practical} showed that black-box attacks 
could succeed with no access to training data, by exploiting the target 
model's predictions to \emph{extract}~\citep{tramer2016stealing}
a surrogate model.
Some prior works have hinted that adversarially trained models 
may remain vulnerable to black-box attacks:
\citet{goodfellow2014explaining} 
found that an adversarial maxout network on MNIST has slightly higher 
error on transferred examples than on white-box examples. 
\citet{papernot2016practical} further showed 
that a model trained on small perturbations can be evaded 
by transferring perturbations of larger magnitude.
Our finding that adversarial training degrades 
the accuracy of linear approximations of the model's loss  
is as an instance of a \emph{gradient-masking} phenomenon~\citep{papernot2016towards},
which affects other defensive techniques~\citep{papernot2016distillation, carlini2016towards, nayebi2017biologically, brendel2017comment, athalye2018obfuscated}.

\vspace*{-0.05in}

\section{The Adversarial Training Framework}
\label{sec:preliminaries}

We consider a classification task with 
data $x \in [0,1]^d$ and labels $y_\text{true} \in \Ints_k$ 
sampled from a distribution $\dist$. 
We identify a model with an hypothesis $h$ from a space $\hyp$. On input 
$x$, the model outputs class scores $h(x)\in\Reals^k$.
The loss function used to train the model, e.g., cross-entropy, is $L(h(x), y)$.

\subsection{Threat Model}
\label{ssec:threat-model}

For some target model $h \in \hyp$ and inputs $(x, y_\text{true})$
the adversary's goal is to find an \emph{adversarial
example} $x^{\adv}$ such that $x^{\adv}$ and $x$ are ``close'' 
yet the model misclassifies $x^{\adv}$.
We consider the well-studied class of $\ell_\infty$ bounded 
adversaries~\citep{goodfellow2014explaining, madry2017towards} that, 
given some \emph{budget} $\epsilon$, output examples 
$x^{\adv}$ where $\|x^{\adv} - x\|_\infty \leq \epsilon$. As we comment 
in Appendix~\ref{apx:mnist-robustness}, $\ell_\infty$ robustness 
is of course not an end-goal for secure ML. We use this standard model
to showcase limitations of prior adversarial training methods, 
and evaluate our proposed improvements.

We distinguish between \emph{white-box} adversaries that have access 
to the target model's parameters (i.e., $h$), and \emph{black-box} 
adversaries with only partial information about the model's inner workings. Formal definitions for 
these adversaries are in Appendix~\ref{apx:formal-threat-model}. 
Although security against white-box attacks 
is the stronger notion (and the one we ideally want 
ML models to achieve), black-box security is a reasonable 
and more tractable goal for deployed ML models.

\subsection{Adversarial Training}
\label{ssec:minmax}

Following \citet{madry2017towards}, we consider an adversarial 
variant of standard Empirical Risk Minimization (ERM), where 
our aim is to minimize the risk over adversarial examples:
\begin{equation}
\label{eq:minmax}
h^* = \argmin_{h \in \hyp} \quad
	  \Exp_{(x, y_\text{true}) \sim \dist} 
	  \Big[ 
	  	\max_{\|x^{\adv} - x\|_\infty \leq \epsilon} L(h(x^{\adv}), y_\text{true})
	  \Big] \;.
\end{equation}
\citet{madry2017towards} argue that adversarial training has a natural 
interpretation in this context, where a given attack (see below) is used 
to approximate solutions to the inner maximization problem, and the 
outer minimization problem corresponds to training over these examples.
Note that the original formulation of adversarial training~\citep{szegedy2013intriguing, goodfellow2014explaining},
which we use in our experiments, trains on both the ``clean'' examples $x$ and 
adversarial examples $x^{\adv}$.

We consider three algorithms to generate adversarial examples with bounded 
$\ell_\infty$ norm.
The first two are \emph{single-step} (i.e., they require a single 
gradient computation); the third is \emph{iterative}---it computes multiple 
 gradient updates. We enforce 
 $x^{\adv} \in [0,1]^d$ by clipping all components of $x^{\adv}$.

\textbf{Fast Gradient Sign Method (FGSM).} This method~\citep{goodfellow2014explaining} 
linearizes the inner maximization problem in~\eqref{eq:minmax}:
\begin{equation}
\label{eq:fgsm}
x^{\adv}_{\text{FGSM}} \coloneqq x + \varepsilon \cdot \mathtt{sign}\left(\nabla_x L(h(x), y_\text{true})\right) \;.
\end{equation}

\textbf{Single-Step Least-Likely Class Method (Step-LL).} This variant of FGSM
introduced by \citet{kurakin2016adversarial, kurakin2016scale} 
targets the least-likely class, $y_{\text{LL}} = \argmin \{h(x)\}$:
\begin{equation}
x^{\adv}_{\text{LL}} \coloneqq x - \varepsilon \cdot \mathtt{sign}\left(\nabla_x L(h(x), y_\text{LL})\right) \;.
\end{equation}
Although this attack only indirectly tackles the inner maximization in~\eqref{eq:minmax},~\citet{kurakin2016scale} find it to be the most effective 
for adversarial training on ImageNet.

\textbf{Iterative Attack (I-FGSM or Iter-LL).} This method iteratively applies 
the FGSM or Step-LL $k$ times with step-size $\alpha \geq \epsilon/k$ and projects each step onto the $\ell_\infty$ ball of norm $\epsilon$ around $x$. It uses projected gradient descent to solve the maximization in~\eqref{eq:minmax}.
For fixed $\epsilon$, iterative attacks induce higher error rates than single-step attacks, but \emph{transfer} at lower rates~\citep{kurakin2016adversarial, kurakin2016scale}.

\subsection{A Degenerate Global Minimum for Single-Step Adversarial Training}
\label{ssec:unwarranted-min}

When performing adversarial training with a single-step attack (e.g., the FGSM or Step-LL methods above), we approximate Equation~\eqref{eq:minmax} by replacing the solution to the inner maximization problem in with the output of the single-step attack (e.g., $x^{\adv}_{\text{FGSM}}$ in~\eqref{eq:fgsm}). That is, we solve
\begin{equation}
h^* = \argmin_{h \in \hyp} \quad
\Exp_{(x, y_\text{true}) \sim \dist} 
\Big[ L(h(x^{\adv}_{\text{FGSM}}), y_\text{true})
\Big] \;.
\end{equation}
For model families $\hyp$ with high expressive power,
this alternative optimization problem admits at least two substantially different global minima $h^*$:

\begin{itemize}[leftmargin=0.1in]
	\item For an input $x$ from $\dist$, there is no $x^\adv$ close to $x$ (in $\ell_\infty$ norm) that induces a high loss. That is,
	\begin{equation}
	L(h^*(x^{\adv}_{\text{FGSM}}), y_\text{true}) \approx \max_{\|x^{\adv} - x\|_\infty \leq \epsilon} L(h^*(x^{\adv}), y_\text{true}) 
	] \approx 0 \;.
	\end{equation}
	In other words, $h^*$ is robust to all $\ell_\infty$ bounded perturbations.
	
	\item The minimizer $h^*$ is a model for which the approximation method underlying the attack (i.e., linearization in our case) poorly fits the model's loss function. That is,
	\begin{equation}
	L(h^*(x^{\adv}_{\text{FGSM}}), y_\text{true}) \ll \max_{\|x^{\adv} - x\|_\infty \leq \epsilon} L(h^*(x^{\adv}), y_\text{true})
	] \;.
	\end{equation}
	Thus the attack when applied to $h^*$ produces samples $x^{\adv}$ that are far from optimal.
\end{itemize}

Note that this second ``degenerate'' minimum can be more subtle than a simple case of overfitting to samples produced from single-step attacks. Indeed, we show in Section~\ref{ssec:attacks-v3} that
single-step attacks applied to adversarially trained models create ``adversarial'' examples \emph{that are 
easy to classify even for undefended models}. Thus, adversarial training does not simply learn to resist the particular attack used during training, but actually to make that attack perform worse overall. 
This phenomenon relates to the notion of \emph{Reward Hacking}~\citep{amodei2016concrete} wherein an agent maximizes its formal objective function via unintended behavior that fails to captures the designer's true intent.

\subsection{Ensemble Adversarial Training}
\label{ssec:eat}

The degenerate minimum described in Section~\ref{ssec:unwarranted-min} is attainable because the learned model's parameters influence the quality of both the minimization and maximization in~\eqref{eq:minmax}. One solution is to use a stronger adversarial example generation process, at a high performance cost~\citep{madry2017towards}.
Alternatively, \citet{baluja2017adversarial} suggest training an adversarial \emph{generator} model as in the GAN framework~\citep{goodfellow2014generative}. The power of this generator is likely to require careful tuning, to avoid similar degenerate minima (where the generator or classifier overpowers the other).

We propose a conceptually simpler approach to \emph{decouple} the generation of adversarial examples from the model being trained, while simultaneously drawing an explicit connection with robustness to black-box adversaries.
Our method, which we call \emph{Ensemble Adversarial Training},
augments a model's training data with adversarial examples crafted on other \emph{static pre-trained} models.
Intuitively, as adversarial examples transfer between models, perturbations crafted on an external model are good approximations for the maximization problem in~\eqref{eq:minmax}. Moreover, the learned model can not influence the ``strength'' of these adversarial examples. As a result, minimizing the training loss implies increased robustness to black-box attacks from some set of models. 

\paragraph{Domain Adaptation with multiple sources.}

We can draw a connection between Ensemble Adversarial Training and \emph{multiple-source Domain Adaptation}~\citep{mansour2009domain, zhang2012generalization}. In Domain Adaptation, a model trained on data sampled from one or more \emph{source distributions} $\mathcal{S}_1, \dots, \mathcal{S}_k$ is evaluated on samples $x$ from a different \emph{target distribution} $\mathcal{T}$. 

Let $\mathcal{A}_i$ be an adversarial distribution obtained by sampling $(x, y_\text{true})$ from $\dist$, computing an adversarial example $x^{\adv}$ for some model such that $\|x^{\adv} - x\|_\infty \leq \epsilon$, and outputting $(x^{\adv}, y_\text{true})$.
In Ensemble Adversarial Training, the source distributions are $\dist$ (the clean data) and $\mathcal{A}_1, \dots, \mathcal{A}_k$ (the attacks overs the currently trained model and the static pre-trained models). The target distribution takes the form of an unseen black-box adversary $\mathcal{A^*}$.
Standard generalization bounds for Domain Adaptation~\citep{mansour2009domain, zhang2012generalization} yield the following result.

\begin{theorem}[informal]
	\label{thm:generalization-informal}
	Let $h^* \in \hyp$ be a model learned with Ensemble Adversarial Training and static black-box adversaries $\mathcal{A}_1, \dots, \mathcal{A}_k$. Then, if $h^*$ is robust against the black-box adversaries $\mathcal{A}_1, \dots \mathcal{A}_k$ used at training time, then $h^*$ has bounded error on attacks from a future black-box adversary $\mathcal{A^*}$, if $\mathcal{A^*}$ is not ``much stronger'', on average, than the static adversaries $\mathcal{A}_1, \dots, \mathcal{A}_k$.
\end{theorem}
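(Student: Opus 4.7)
The plan is to specialize a multi-source domain-adaptation bound of the type proved by Mansour et al.\ and Zhang et al.\ to our setting, with the source distributions being the clean distribution $\dist$ together with the training-time adversarial distributions $\mathcal{A}_1, \dots, \mathcal{A}_k$, and the target being the new black-box adversary $\mathcal{A}^*$. Concretely, for any fixed mixture weights $\alpha_0, \dots, \alpha_k \ge 0$ summing to one, write $\bar{\mathcal{S}} = \alpha_0 \dist + \sum_{i \ge 1} \alpha_i \mathcal{A}_i$. The standard bound applied to our loss $L$ gives, for every $h \in \hyp$,
\[
\Exp_{\mathcal{A}^*} L(h(x), y) \;\le\; \sum_{i=0}^{k} \alpha_i \, \Exp_{\mathcal{S}_i} L(h(x), y) \;+\; \mathrm{disc}_L\bigl(\bar{\mathcal{S}}, \mathcal{A}^*\bigr) \;+\; \lambda,
\]
where $\mathrm{disc}_L$ is the discrepancy distance for loss $L$ over $\hyp$ and $\lambda$ is the loss of the best joint predictor on the source mixture and the target.

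I would then translate the two hypotheses of the theorem into bounds on the right-hand side. The assumption that $h^*$ is robust against each $\mathcal{A}_i$ (and against the clean data) directly controls the first sum, by definition of the Ensemble Adversarial Training objective evaluated at $h^*$. The phrase ``$\mathcal{A}^*$ is not much stronger on average than the static adversaries'' I would formalize as the statement that there exist weights $\alpha_i$ making $\mathrm{disc}_L(\bar{\mathcal{S}}, \mathcal{A}^*)$ small; this is natural because $\mathrm{disc}_L$ is a uniform-over-$\hyp$ quantity capturing exactly the idea that no hypothesis can be fooled substantially more by $\mathcal{A}^*$ than by a convex combination of the training adversaries. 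The ideal-joint-loss term $\lambda$ is small whenever $\hyp$ contains \emph{some} hypothesis that is simultaneously robust to all of the relevant adversaries. Combining the three pieces gives the desired bounded-error conclusion; a finite-sample version follows by adding a standard uniform-convergence term (Rademacher or VC-based) as in the cited papers.

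The main obstacle is not the chain of inequalities, which is routine once the correspondence with multi-source domain adaptation is set up, but the formalization of ``not much stronger on average''. Using $\mathrm{disc}_L$ is convenient for plugging into an off-the-shelf bound, but it pushes the real difficulty into verifying that some convex combination of the training adversarial distributions indeed approximates future black-box adversaries. A natural sufficient condition, which I would argue for separately, is a transferability assumption: adversarial perturbations found on any surrogate an external adversary might train behave, on average, like perturbations found on the $\mathcal{A}_i$. Without such a condition the bound is essentially vacuous, which is why the theorem is stated informally. A secondary subtlety is that $\mathcal{A}^*$ may adaptively depend on $h^*$, so a fully rigorous statement must either quantify over a class of admissible black-box procedures or fix $\mathcal{A}^*$ before $h^*$ is deployed.
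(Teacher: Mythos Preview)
Your approach is correct and follows the same high-level strategy as the paper: cast Ensemble Adversarial Training as multi-source domain adaptation and invoke an off-the-shelf bound, with the discrepancy term formalizing ``not much stronger on average''. The paper's formal version (Appendix~\ref{appendix:domain-adaptation}) differs in a few technical choices worth noting. First, the paper applies Zhang et al.'s Theorem~5.2 directly, which yields a \emph{uniform} deviation bound $\sup_{h}|\hat{R}(h,\mathcal{A}_{\text{train}}) - R(h,\mathcal{A}^*)|$ controlled by an \emph{average} of the pairwise discrepancies $\frac{1}{k}\sum_i \text{disc}_{\hyp}(\mathcal{A}_i,\mathcal{A}^*)$ plus a Rademacher term; there is no ideal-joint-loss term $\lambda$. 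Your Mansour-style bound introduces $\lambda$ and then has to argue it is small, which is an extra (if mild) assumption the paper's route avoids. Second, the paper simply fixes uniform weights and drops the clean distribution $\dist$ from the formal statement for simplicity, whereas you optimize over convex combinations $\alpha_i$; your version is slightly more flexible but the paper's is cleaner to state. Your closing remarks about the bound being vacuous without a transferability assumption, and about the subtlety of $\mathcal{A}^*$ possibly depending on $h^*$, are apt and go a bit beyond what the paper discusses.
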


We give a formal statement of this result and of the assumptions on $\mathcal{A^*}$ in Appendix~\ref{appendix:domain-adaptation}. Of course, ideally we would like guarantees against \emph{arbitrary} future adversaries. For very low-dimensional tasks (e.g., MNIST), stronger guarantees are within reach for specific classes of adversaries (e.g., $\ell_\infty$ bounded perturbations~\citep{madry2017towards, sinha2018certifiable, raghunathan2018certified, kolter2017provable}), yet they also fail to extend to other adversaries not considered at training time (see Appendix~\ref{apx:mnist-robustness} for a discussion).
For ImageNet-scale tasks, stronger formal guarantees appear out of reach, and we thus resort to an experimental assessment of the robustness of Ensemble Adversarially Trained models to various non-interactive black-box adversaries in Section~\ref{ssec:results:eat}.\footnote{We note that subsequent work did succeed in scaling both empirical~\citep{xie2019feature} and certified~\citep{cohen2019certified} white-box defenses to ImageNet-scale tasks.}


\section{Experiments}
\label{sec:experiments}

We show the existence of a degenerate minimum, as described in Section~\ref{ssec:unwarranted-min}, for the adversarially trained Inception v3 model of~\citet{kurakin2016scale}. Their model (denoted v3$_{\adv}$) was trained on a Step-LL attack with $\epsilon \leq {16}/{256}$. We also adversarially train an Inception ResNet v2 model~\citep{szegedy2016inception} using the same setup. We denote this model by IRv2$_{\adv}$.
We refer the reader to~\citep{kurakin2016scale} for details on the adversarial training procedure.

We first measure the \emph{approximation-ratio} of the Step-LL attack for the inner maximization in~\eqref{eq:minmax}. As we do not know the true maximum, we lower-bound it using an iterative attack. For $1{,}000$ random test points, we find that for a standard Inception v3 model, step-LL gets within $19\%$ of the optimum loss on average. This attack is thus a good candidate for adversarial training. Yet, for the v3$_{\adv}$ model, the approximation ratio drops to $7\%$, confirming that the learned model is less amenable to linearization. We obtain similar results for Inception ResNet v2 models. The ratio is $17\%$ for a standard model, and $8\%$ for IRv2$_{\adv}$.
Similarly, we look at the \emph{cosine similarity} between the perturbations given by a single-step and multi-step attack. The more linear the model, the more similar we expect both perturbations to be. The average similarity drops from $0.13$ for Inception v3 to $0.02$ for v3$_{\adv}$. This effect is not due to the decision surface of v3$_{\adv}$ being ``too flat'' near the data points: the average gradient norm is larger for v3$_{\adv}$ ($0.17$) than for the standard v3 model ($0.10$). 

\begin{figure}[t]
	\centering
	\begin{subfigure}{0.48\textwidth}
		\centering
		\hspace{-1.5em}
		\includegraphics[width=\textwidth, height=0.7\textwidth]{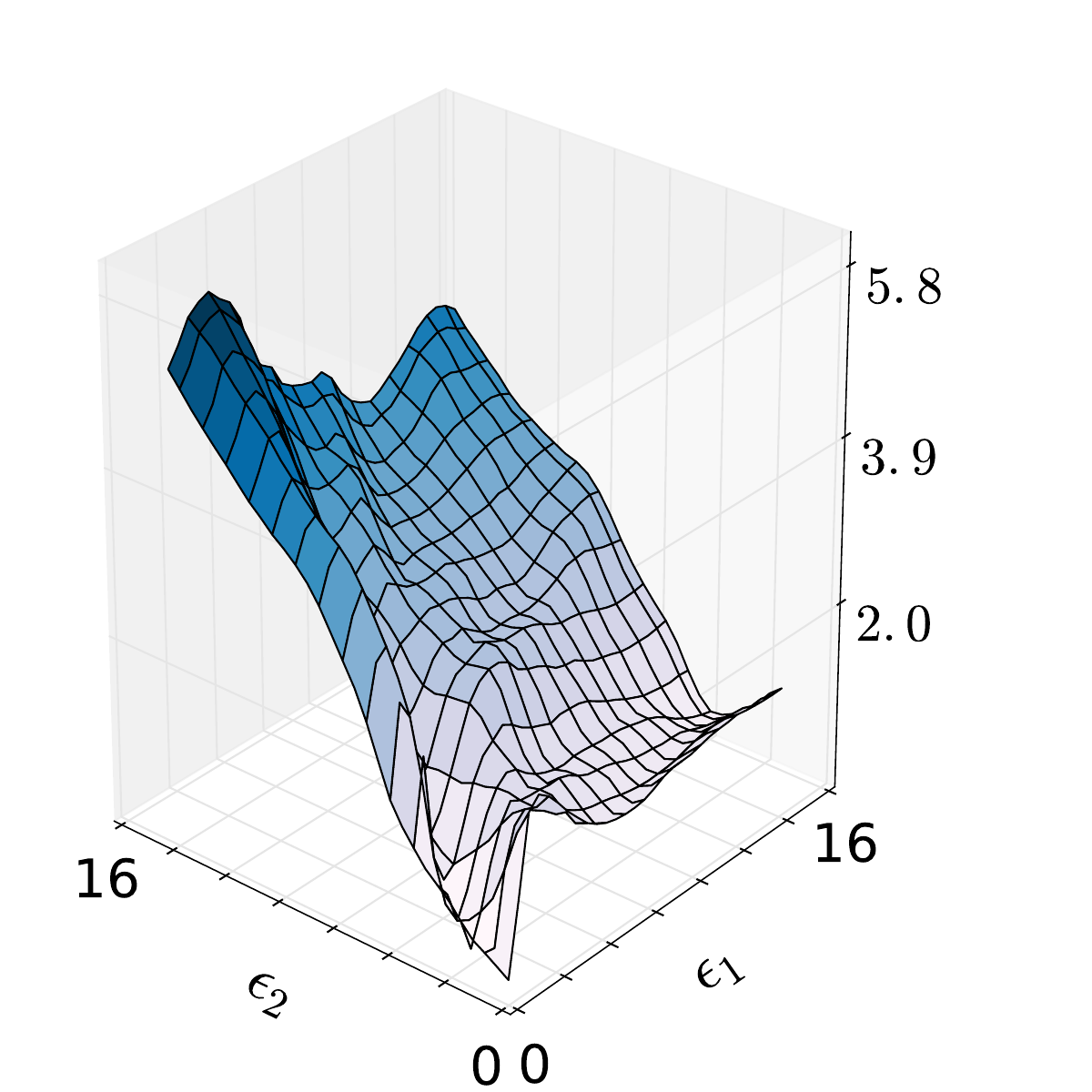}
		\caption{Loss of model v3$_\text{adv}$.}
	\end{subfigure}%
	~
	\begin{subfigure}{0.48\textwidth}
		\centering
		\hspace{-1.5em}
		\includegraphics[width=\textwidth, height=0.7\textwidth]{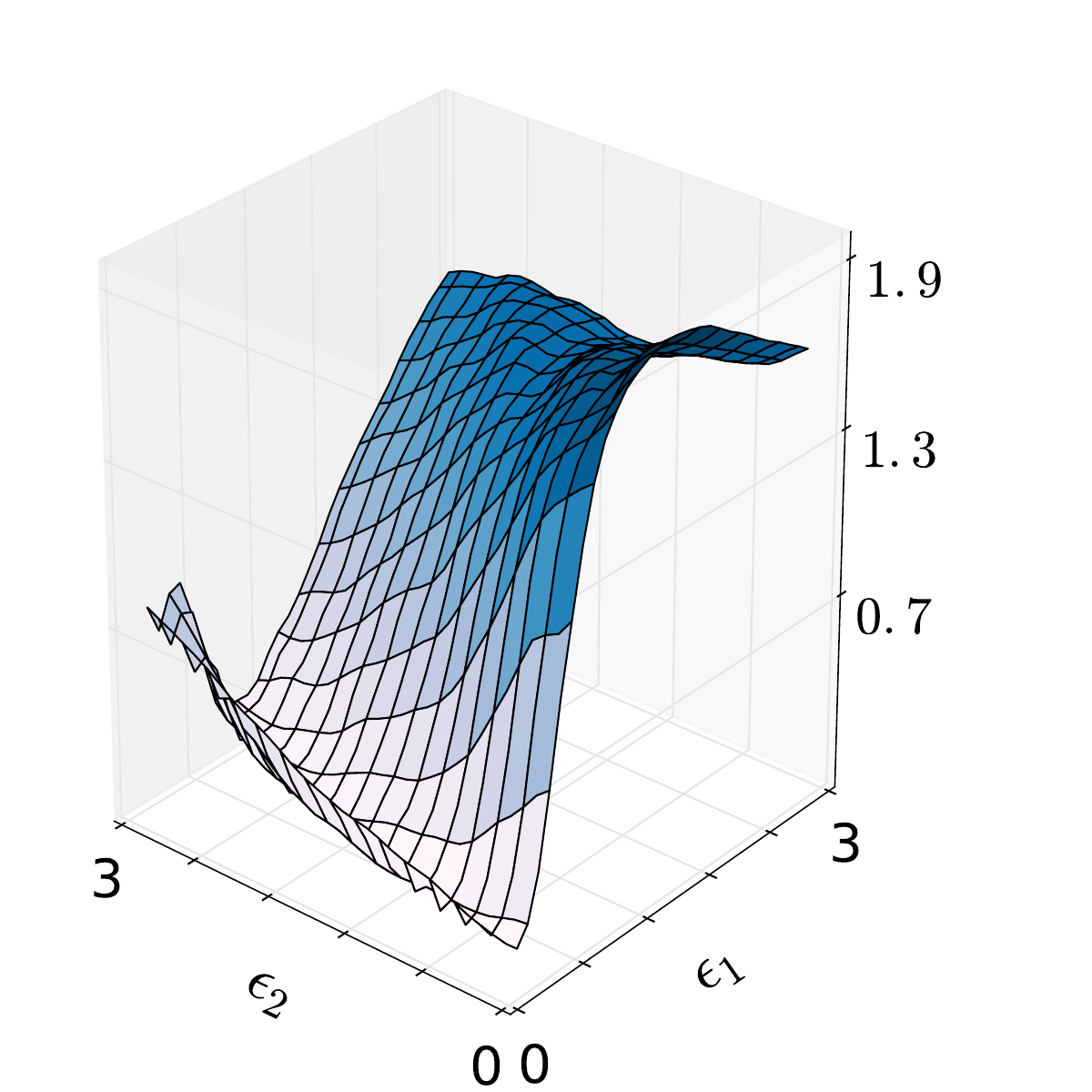}
		\caption{Zoom in for small $\epsilon_1, \epsilon_2$.}
	\end{subfigure}
	
	\vspace{-0.25em}
	\caption{\textbf{Gradient masking in single-step adversarial 
			training.} We plot the loss of model 
		v3$_{\adv}$ on points $x^* = x + \epsilon_1 \cdot g + 
		\epsilon_2 \cdot g^\bot$, 
		where $g$ is the signed gradient and 
		$g^\bot$ is an orthogonal adversarial direction.
		Plot (b) is a zoom of (a) near $x$. The gradient poorly approximates the global loss.\\[-2em]}
	
	\label{fig:masking}
\end{figure}

We visualize this ``gradient-masking'' effect~\citep{papernot2016towards}
by plotting the loss of v3$_{\adv}$ on examples 
$x^* = x + \epsilon_1 \cdot g + \epsilon_2 \cdot g^{\scriptscriptstyle\bot}$, where $g$ is the signed gradient of 
model v3$_{\adv}$ and $g^{\scriptscriptstyle\bot}$ is a signed vector orthogonal
to $g$. Looking forward to Section~\ref{ssec:attacks-v3}, we actually chose $g^{\scriptscriptstyle\bot}$ to be the signed gradient of another Inception model, from which adversarial examples transfer to v3$_{\adv}$.
Figure~\ref{fig:masking} shows that the loss is highly curved in the vicinity of the data point $x$, and that the gradient poorly reflects the global loss landscape.  Similar plots for additional data points are in Figure~\ref{fig:extra-masking-v3}.

We show similar results for adversarially trained MNIST models in Appendix~\ref{apx:mnist-results}. On this task,  \emph{input dropout}~\citep{srivastava2014dropout} mitigates adversarial training's overfitting problem, in some cases. Presumably, the random input mask \emph{diversifies} 
the perturbations seen during training (dropout at intermediate layers does not mitigate the overfitting effect). \citet{mishkin2017systematic} find 
that input dropout significantly degrades accuracy on ImageNet, 
so we did not include it in our experiments.

\subsection{Attacks Against Adversarially Trained Networks}
\label{ssec:attacks-v3}

\citet{kurakin2016scale} found their adversarially trained model to be robust to various single-step attacks. They conclude that this robustness should translate to attacks \emph{transferred} from other models. As we have shown, the robustness to single-step attacks is actually misleading, as the model has learned to \emph{degrade} the information contained in the model's gradient. As a consequence, we find that the v3$_{\adv}$ model is substantially more vulnerable to single-step attacks than \citet{kurakin2016scale} predicted, both in a white-box and black-box setting. The same holds for the IRv2$_{\adv}$ model.

In addition to the v3$_{\adv}$ and IRv2$_{\adv}$ models, we consider standard Inception v3, Inception v4 and Inception ResNet v2 models. These models are available in the \texttt{TensorFlow-Slim} library~\citep{tensorflow2015-whitepaper}. We describe similar results for a variety of models trained on MNIST in Appendix~\ref{apx:mnist-results}.

\paragraph{Black-box attacks.}
\label{ssec:black-box}
\vspace*{-0.1in}

Table~\ref{table:black-box} shows error rates for single-step attacks transferred between models. We compute perturbations
on one model (the source) and transfer them to all others (the targets).
When the source and target are the same, the attack is white-box.
Adversarial training greatly increases robustness to 
white-box single-step attacks, 
but incurs a higher error rate in a black-box setting.
Thus, the robustness gain observed when evaluating 
defended models in isolation is misleading. 
Given the ubiquity of this pitfall among proposed 
defenses against adversarial examples~\citep{carlini2016towards, 
	brendel2017comment, papernot2016towards}, we advise 
researchers \emph{to always consider both white-box and black-box adversaries 
	when evaluating defensive strategies}. Notably, a similar discrepancy between white-box and black-box attacks was recently observed in~\citet{buckman2018thermometer}.

\begin{table}[t]
	\caption{\textbf{Error rates (in $\%$) of adversarial examples
			transferred between models.}
		We use Step-LL with $\epsilon=\sfrac{16}{256}$ for $10{,}000$ random test inputs.
		Diagonal elements represent a white-box attack. The best attack for each target appears in bold.
		Similar results for MNIST models appear in Table~\ref{tab:attacks-mnist}.\\[-1.5em]}
	\centering
	\begin{subfigure}{0.48\textwidth}
		\centering
		\begin{tabular}{@{}p{1cm} r r r r r@{}}
			 & \multicolumn{5}{c}{\textbf{Source}} \\
			\textbf{Target} & v4 & v3 & v3$_{\text{adv}}$ & IRv2 & IRv2$_{\text{adv}}$\\
			\toprule
			v4 & \bfs 60.2 & 39.2 & 31.1 & 36.6 & 30.9 \\
			v3 & 43.8 & \bfs 69.6 & 36.4 & 42.1 & 35.1 \\
			v3$_{\text{adv}}$ & \bfs 36.3 & 35.6 & \cancel{26.6} & 35.2 & 35.9 \\
			IRv2 & 38.0 & 38.0 & 30.8 & \bfs 50.7 & 31.9 \\
			IRv2$_{\text{adv}}$ & \bfs 31.0 & 30.3 & 25.7 & 30.6 & \cancel{21.4} \\
			\bottomrule
		\end{tabular}
		\caption*{\textbf{Top 1}}
	\end{subfigure}
	\hfill
	\begin{subfigure}{0.48\textwidth}
		\centering
		\begin{tabular}{@{}p{1cm} r r r r r @{}}
			 & \multicolumn{5}{c}{\textbf{Source}} \\
			\textbf{Target} & v4 & v3 & v3$_{\text{adv}}$ & IRv2 & IRv2$_{\text{adv}}$\\
			\toprule
			v4 & \bfs 31.0 & 14.9 & 10.2 & 13.6 & 9.9 \\
			v3 & 18.7 & \bfs 42.7 & 13.0 & 17.8 & 12.8 \\
			v3$_{\text{adv}}$ & 13.6 & 13.5 & \cancel{9.0} & 13.0 & \bfs 14.5 \\
			IRv2 & 14.1 & 14.8 & 9.9 & \bfs 24.0 & 10.6 \\
			IRv2$_{\text{adv}}$ & 10.3 & \bfs 10.5 & 7.7 & 10.4 & \cancel{5.8} \\
			\bottomrule
		\end{tabular}
		\caption*{\textbf{Top 5}}
	\end{subfigure}
	\label{table:black-box}
	\\[-1em]
\end{table}

Attacks crafted on adversarial models are found to be weaker even against undefended models (i.e., when using v3$_{\adv}$ or IRv2$_{\adv}$ as source, the attack transfers with lower probability). This confirms our intuition from Section~\ref{ssec:unwarranted-min}: adversarial training 
does not just overfit to perturbations that affect standard models, but actively degrades the linear approximation underlying the single-step attack. 

\paragraph{A new randomized single-step attack.}
\label{ssec:rand-fgsm}
\vspace*{-0.1in}

The loss function visualization in Figure~\ref{fig:masking} 
shows that sharp curvature artifacts localized near the data points 
can mask the true direction of steepest ascent.
We thus suggest to prepend single-step attacks 
by a small random step, in order to ``escape'' the non-smooth vicinity of the 
data point before linearizing the model's loss.
Our new attack, called R+FGSM (alternatively, R+Step-LL), 
is defined as follows, for parameters $\epsilon$ and $\alpha$ 
(where $\alpha < \epsilon$):
\begin{align}
x^{\adv} = x' + (\varepsilon - \alpha) \cdot \mathtt{sign}\left(\nabla_{x'} J(x', y_{\text{true}})\right), \quad \text{where} \quad x' = x + \alpha \cdot \mathtt{sign}(\mathcal{N}(\mathbf{0}^d, \mathbf{I}^d)) \;.
\end{align}
Note that the attack requires a single gradient 
computation. 
The R+FGSM is a computationally 
efficient alternative to iterative methods that have high
success rates in a white-box setting. 
Our attack can be seen as a single-step variant  
of the general PGD method from~\citep{madry2017towards}.

Table~\ref{table:rand-fgsm} compares error rates for 
the Step-LL and R+Step-LL methods (with $\epsilon = {16}/{256}$ and $\alpha={\epsilon}/{2}$).
The extra random step yields a stronger attack for all models, even those without adversarial training.
This suggests that a model's loss function is generally
less smooth near the data points.
We further compared the R+Step-LL attack to a two-step Iter-LL attack, which computes two gradient steps. Surprisingly, we find that for the adversarially trained Inception v3 model, the R+Step-LL attack is \emph{stronger} than the two-step Iter-LL attack. That is, the local gradients learned by the adversarially trained model are \emph{worse than random directions} for finding adversarial examples! 

\begin{table}[t]
	\caption{\textbf{Error rates (in $\%$) for Step-LL, R+Step-LL and a two-step Iter-LL on ImageNet.} 
		We use $\epsilon=\sfrac{16}{256}$,  $\alpha=\sfrac{\epsilon}{2}$ on $10{,}000$ random test inputs. R+FGSM results on MNIST are in Table~\ref{tab:attacks-mnist}.\\[-1.5em]}
	\begin{minipage}[t]{0.14\textwidth}
		\begin{tabular}{@{}l@{}}
			\\
			\textbf{Step-LL}\\
			\textbf{R+Step-LL}\\
			\textbf{Iter-LL(2)}\\
		\end{tabular}
	\end{minipage}
	\begin{subfigure}[t]{0.42\textwidth}
		\centering
		\begin{tabular}{@{} r r r r r @{}}
			v4 & v3 & v3$_{\text{adv}}$ & IRv2 & IRv2$_{\text{adv}}$\\
			\toprule
			60.2 & 69.6 & 26.6 & 50.7 & 21.4\\
			70.5 & 80.0 & \bfs 64.8 & 56.3 & 37.5\\
			\bfs 78.5 & \bfs 86.3 & \cancel{58.3} & \bfs 69.9 & \bfs 41.6 \\
			\bottomrule
		\end{tabular}
		\caption*{\textbf{Top 1}}
	\end{subfigure}
	\begin{subfigure}[t]{0.42\textwidth}
		\centering
		\begin{tabular}{@{} r r r r r @{}}
			v4 & v3 & v3$_{\text{adv}}$ & IRv2 & IRv2$_{\text{adv}}$\\
			\toprule
			31.0 & 42.7 & 9.0 & 24.0 & 5.8\\
			42.8 & 57.1 & \bfs 37.1 & 29.3 & 15.0\\
			\bfs 56.2 & \bfs 70.2 & \cancel{29.6} & \bfs 45.4 & \bfs 16.5\\
			\bottomrule
		\end{tabular}
		\caption*{\textbf{Top 5}}
	\end{subfigure}
	\label{table:rand-fgsm}
\end{table}

We find that the addition of this random step hinders transferability 
(see Table~\ref{table:rand-fgsm-transfer-imagenet}).
We also tried adversarial training using
R+FGSM on MNIST, using a similar approach as~\citep{madry2017towards}.
We adversarially train a CNN (model A in Table~\ref{table:mnist_archis}) for $100$ epochs, and attain 
$>90.0\%$ accuracy on R+FGSM samples. However, training on R+FGSM 
provides only little robustness to iterative attacks. 
For the PGD attack of~\citep{madry2017towards} with $20$ steps, 
the model attains $18.0\%$ accuracy.
Subsequent work by Wong et al.~\cite{wong2020fast} shows that single-step adversarial training with an attack similar to R+FGSM successfully yields models robust to white-box attacks, if the step-sizes of the attack's random and gradient step are appropriately tuned.

\subsection{Ensemble Adversarial Training}
\label{ssec:results:eat}

We now evaluate our Ensemble Adversarial Training strategy described in Section~\ref{ssec:eat}. We recall our intuition: by augmenting training data with adversarial examples crafted from \emph{static pre-trained models}, we decouple the generation of adversarial examples from the model being trained, so as to avoid the degenerate minimum described in Section~\ref{ssec:unwarranted-min}. Moreover, our hope is that robustness to attacks transferred from some fixed set of models will generalize to other black-box adversaries.

\begin{table}[t]
	\caption{\textbf{Models used for Ensemble Adversarial Training on ImageNet.}
		The ResNets~\citep{he2016deep} use either 
		50 or 101 layers. IncRes stands for Inception ResNet~\citep{szegedy2016inception}.\\[-1.5em]}
	\centering
	\setlength{\tabcolsep}{5pt}
	\begin{tabular}{@{} c c c @{}}
		\textbf{Trained Model} & \textbf{Pre-trained Models} & \textbf{Holdout Models} \\
		\toprule
		Inception v3 (v3$_\text{adv-ens3}$) 		& Inception v3, ResNet v2 (50) & 
		\multirow{3}{*}{$\left\{\shortstack[c]{Inception v4 \\ ResNet v1 (50) \\ ResNet v2 (101)\\[-1.7em]}\right\}$}\\
		Inception v3 (v3$_\text{adv-ens4}$) 		& Inception v3, ResNet v2 (50), IncRes v2 \\
		IncRes v2 (IRv2$_\text{adv-ens}$)   & Inception v3, IncRes v2 \\
		\bottomrule	
	\end{tabular}
	\label{table:ensemble-models}
\end{table}

We train Inception v3 and Inception ResNet v2 models~\citep{szegedy2016inception} on ImageNet, using the pre-trained models shown in Table~\ref{table:ensemble-models}.
In each training batch, we rotate the source of adversarial 
examples between the currently trained model and one of the  
pre-trained models. We select the source  
model \emph{at random} in each batch, to diversify 
examples across epochs.
The pre-trained 
models' gradients can be precomputed for the full training set. 
\emph{The per-batch cost of Ensemble Adversarial Training is thus lower than 
that of standard adversarial training}: using our method 
with $n-1$ pre-trained models, 
only every $n^{\text{th}}$ batch requires a forward-backward 
pass to compute adversarial gradients.
We use synchronous distributed training 
on 50 machines, with minibatches of size 16 (we did not 
pre-compute gradients, and thus lower the 
batch size to fit all models in memory).
Half of the examples in a minibatch are replaced by Step-LL 
examples. As in~\citet{kurakin2016scale}, we use RMSProp with a learning rate of $0.045$, decayed by a factor of $0.94$ every two epochs.

To evaluate how robustness to black-box attacks generalizes across models, we transfer various attacks crafted on three different holdout models (see Table~\ref{table:ensemble-models}), as well as on an \emph{ensemble} of these models (as in~\citet{liu2016delving}). We use the \textbf{Step-LL}, \textbf{R+Step-LL}, \textbf{FGSM}, \textbf{I-FGSM} and the \textbf{PGD} attack from~\citet{madry2017towards} using the hinge-loss function from~\citet{carlini2016towards}.
Our results are in Table~\ref{table:ensemble-training-imagenet}. For each model, we report the \emph{worst-case} error rate over all black-box attacks transfered from each of the holdout models ($20$ attacks in total).
Results for MNIST are in Table~\ref{tab:eat-mnist}.

\paragraph{Convergence speed.}
\vspace*{-0.05in}
Convergence of Ensemble Adversarial Training is slower
than for standard adversarial training, a result of training on ``hard'' adversarial examples and lowering the batch size. \citet{kurakin2016scale} report that after $187$ epochs
($150k$ iterations with minibatches of size $32$), 
the v3$_\text{adv}$ model achieves $78\%$ accuracy. 
Ensemble Adversarial 
Training for models v3$_\text{adv-ens3}$ and v3$_\text{adv-ens4}$ 
converges after $280$ epochs ($450k$ iterations with minibatches
of size $16$). The Inception ResNet v2 model is trained for $175$ epochs, where a baseline model converges at around $160$ epochs.

\paragraph{White-box attacks.}
For both architectures, the models trained with Ensemble 
Adversarial Training are slightly less accurate on clean data, 
compared to 
standard adversarial training.
Our models are also more vulnerable to white-box single-step
attacks, as they were only partially trained on such perturbations.
Note that for v3$_\text{adv-ens4}$, the proportion of white-box Step-LL 
samples seen during training is $\sfrac14$ (instead of $\sfrac13$ for 
model v3$_\text{adv-ens3}$). 
The negative impact on the robustness 
to white-box attacks is large, for only a minor gain in 
robustness to transferred samples. Thus it appears that while increasing the \emph{diversity} of adversarial examples seen during training can provide some marginal improvement, the main benefit of Ensemble Adversarial Training is in \emph{decoupling} the attacks from the model being trained, which was the goal we stated in Section~\ref{ssec:eat}.

Ensemble Adversarial Training is not robust to 
\emph{white-box} Iter-LL and R+Step-LL samples:
the error rates are similar to those for the v3$_\text{adv}$ model,
and omitted for brevity 
(see~\citet{kurakin2016scale} for Iter-LL attacks and Table~\ref{table:rand-fgsm} for R+Step-LL attacks). 
\citet{kurakin2016scale} conjecture that larger models 
are needed to attain robustness to such attacks.
Yet, against black-box adversaries, these attacks are only a concern insofar as they reliably transfer between models.

\begin{table}[t]
	\caption{\textbf{Error rates (in $\%$) for Ensemble Adversarial Training on ImageNet.}
		Error rates on clean data are computed over the full test set.
		For $10{,}000$ random test set inputs, and $\epsilon=\sfrac{16}{256}$, we report error rates on white-box Step-LL and the \emph{worst-case error} over a series of black-box attacks (\emph{Step-LL, R+Step-LL, FGSM, I-FGSM, PGD}) transferred from the holdout models in Table~\ref{table:ensemble-models}.
		For both architectures, we mark methods tied for best in bold 
		(based on $95\%$ confidence). \newline 
		\textbf{The subsequent work of~\citet{wu2020skip} proposes more powerful black-box attacks that result in error rates of at least $78\%$ for all models.}\\[-1.5em]}
	\centering
	\begin{tabular}{@{} l r r r @{\hskip 0.1in}r r r @{}}
		& \multicolumn{3}{c}{\textbf{Top 1}} & \multicolumn{3}{c}{\textbf{Top 5}} \\
		\cmidrule(lr{0.1in}){2-4} \cmidrule{5-7}
		\textbf{Model} & \textbf{Clean} & \textbf{Step-LL} & \textbf{Max.~Black-Box} & \textbf{Clean} & \textbf{Step-LL} & \textbf{Max.~Black-Box} \\
		\toprule
		v3 & \bfs 22.0 & 69.6 & 51.2 & \bfs 6.1 & 42.7 & 24.5\\
		v3$_{\text{adv}}$ & \bfs 22.0 & \bfs 26.6 & 40.8 & \bfs 6.1 & \bfs 9.0 & 17.4\\
		v3$_{\text{adv-ens3}}$ & 23.6 & 30.0 & \bfs 34.0 & 7.6 & \bfs 10.1 & \bfs 11.2\\
		v3$_{\text{adv-ens4}}$ & 24.2 & 43.3 & \bfs 33.4 & 7.8 & 19.4 & \bfs 10.7\\
		\midrule
		IRv2 & \bfs 19.6 & 50.7 & 44.4 & \bfs 4.8 & 24.0 & 17.8\\
		IRv2$_{\text{adv}}$ & \bfs 19.8 & \bfs 21.4 & 34.5 & \bfs 4.9 & \bfs 5.8 & 11.7\\
		IRv2$_{\text{adv-ens}}$ & \bfs 20.2 & 26.0 & \bfs 27.0 & \bfs 5.1 & 7.6 & \bfs 7.9\\
		\bottomrule
	\end{tabular}
	\label{table:ensemble-training-imagenet}
	\\[-1em]
\end{table}

\paragraph{Black-box attacks.}
\vspace*{-0.05in}
Ensemble Adversarial Training significantly boosts robustness 
to the attacks we transfer from the holdout models.
For the IRv2$_{\text{adv-ens}}$ model, 
the accuracy loss (compared to IRv2's accuracy on clean data) 
is \textbf{$\mathbf{7.4\%}$ (top 1)} and \textbf{$\mathbf{3.1\%}$ (top 5)}.
We find that the strongest attacks in our test suite (i.e., with highest transfer rates) are the FGSM attacks.
Black-box R+Step-LL or iterative attacks are less effective, as they do not transfer with high probability (see~\citet{kurakin2016scale} and Table~\ref{table:rand-fgsm-transfer-imagenet}).
Attacking an ensemble of all three holdout models, as in~\citet{liu2016delving}, did not lead to stronger black-box attacks than when attacking the holdout models individually.

Our results have little variance with respect to
the attack parameters (e.g., smaller $\epsilon$) or to the use of other holdout models for black-box attacks (e.g., we obtain similar results by attacking the v3$_\text{adv-ens3}$ and v3$_\text{adv-ens4}$ models with the IRv2 model). We also find that v3$_{\text{adv-ens3}}$ is not vulnerable to perturbations transferred from v3$_{\text{adv-ens4}}$. We obtain similar results on MNIST (see Appendix~\ref{apx:mnist-results}), thus demonstrating the applicability of our approach to different datasets and model architectures.

Yet, subsequent work~\citep{dong2019evading, xie2019improving, wu2020skip} has proposed new attacks that substantially improve the transferability of adversarial examples. Although Ensemble Adversarial Training still improves a model's robustness against these attacks, the achieved robust accuracy is greatly reduced. To our knowledge, the strongest attack to date is that proposed by \citet{wu2020skip}, which reduces the accuracy of the IRv2$_{\text{adv-ens}}$ to $22\%$.

\paragraph{The NIPS 2017 competition on adversarial examples.}
Our Inception ResNet v2 model was included as a baseline defense in the NIPS 2017 competition on Adversarial Examples~\citep{kaggle}. Participants of the attack track submitted non-interactive black-box attacks that produce adversarial examples with bounded $\ell_\infty$ norm. Models submitted to the defense track were evaluated on all attacks over a subset of the ImageNet test set. The score of a defense was defined as the \emph{average} accuracy of the model over all adversarial examples produced by all attacks. 

Our IRv2$_{\text{adv-ens}}$ model \emph{finished 1\textsuperscript{st} among $70$ submissions} in the first development round, with a score of $95.3\%$ (the second placed defense scored $89.9\%$). The test data was intentionally chosen as an ``easy'' subset of ImageNet. Our model achieved $97.9\%$ accuracy on the clean test data.

After the first round, we released our model publicly, which enabled other users to launch white-box attacks against it. Nevertheless, a majority of the final submissions built upon our released model. The winning submission (team ``liaofz'' with a score of $95.3\%$) made use of a novel adversarial denoising technique. The second placed defense (team ``cihangxie'' with a score of $92.4\%$) prepends our IRv2$_{\text{adv-ens}}$ model with random padding and resizing of the input image~\citep{xie2018mitigating}.

It is noteworthy that the defenses that incorporated Ensemble Adversarial Training fared better against the \emph{worst-case} black-box adversary. Indeed, although very robust on average, the winning defense achieved as low as $11.8\%$ accuracy on some attacks. The best defense under this metric (team ``rafaelmm'' which randomly perturbed images before feeding them to our IRv2$_{\text{adv-ens}}$ model) achieved at least $53.6\%$ accuracy \emph{against all submitted attacks}, including the attacks that explicitly targeted our released model in a white-box setting.

\paragraph{Decreasing gradient masking.}
\vspace*{-0.05in}
Ensemble Adversarial Training decreases the magnitude of the gradient masking effect described previously. For the v3$_{\text{adv-ens3}}$ and v3$_{\text{adv-ens4}}$ models, we find that the loss incurred on a Step-LL attack gets within respectively $13\%$ and $18\%$ of the optimum loss (we recall that for models v3 and v3\uadv, the approximation ratio was respectively $19\%$ and $7\%$). Similarly, for the IRv2$_{\text{adv-ens}}$ model, the ratio improves from $8\%$ (for IRv2$_{\text{adv}}$) to $14\%$. As expected, not solely training on a white-box single-step attack reduces gradient masking. We also verify that after Ensemble Adversarial Training, a two-step iterative attack outperforms the R+Step-LL attack from Section~\ref{ssec:attacks-v3}, thus providing further evidence that these models have meaningful gradients.

Finally, we revisit the ``Gradient-Aligned Adversarial Subspace'' (GAAS) method of~\citet{tramer2017space}.
Their method estimates the size of the space of adversarial examples in the vicinity of a point, by finding a set of \emph{orthogonal} perturbations of norm $\epsilon$ that are all adversarial.
We note that adversarial perturbations do not technically form a ``subspace'' (e.g., the $\mathbf{0}$ vector is not adversarial). Rather, they may form a ``cone'', the dimension of which varies as we increase $\epsilon$. By linearizing the loss function, estimating the dimensionality of this cone reduces to finding vectors $r_i$ that are strongly aligned with the model's gradient $g=\nabla_x L(h(x), y_\text{true})$. \citet{tramer2017space} give a method that finds $k$ orthogonal vectors $r_i$ that satisfy $g^{\scriptscriptstyle \top} r_i \geq \epsilon \cdot \|g\|_2 \cdot \frac{1}{\sqrt{k}}$ (this bound is tight).
We extend this result to the $\ell_\infty$ norm, an open question in~\citet{tramer2017space}. In Section~\ref{apx:gaas}, we give a randomized combinatorial construction~\citep{colbourn2010crc}, that finds $k$ orthogonal vectors $r_i$ satisfying $\|r_i\|_\infty = \epsilon$ and $\Exp \left[g^{\scriptscriptstyle\top} r_i\right] \geq \epsilon \cdot \|g\|_1 \cdot \frac{1}{\sqrt{k}}$.
We show that this result is tight as well.

For models v3, v3$_{\adv}$ and v3$_{\text{adv-ens3}}$, we select $500$ {correctly classified} test points. For each $x$, we search for a maximal number of orthogonal adversarial perturbations $r_i$ with $\|r_i\|_\infty = \epsilon$. We limit our search to $k \leq 100$ directions per point. The results are in Figure~\ref{fig:gaas}. For $\epsilon \in \{4, 10, 16\}$, we plot the proportion of points that have at least $k$ orthogonal adversarial perturbations. For a fixed $\epsilon$, the value of $k$ can be interpreted as the dimension of a ``slice'' of the cone of adversarial examples near a data point.
For the standard Inception v3 model, we find over $50$ orthogonal adversarial directions for $30\%$ of the points. The v3$_{\text{adv}}$ model shows a curious bimodal phenomenon for $\epsilon \geq 10$: for most points ($\approx80\%$), we find no adversarial direction aligned with the gradient, which is consistent with the gradient masking effect. Yet, for most of the remaining points, the adversarial space is very high-dimensional ($k \geq 90$). Ensemble Adversarial Training yields a more robust model, with only a small fraction of points near a large adversarial space.

\begin{figure}[t]
	\includegraphics[width=\textwidth]{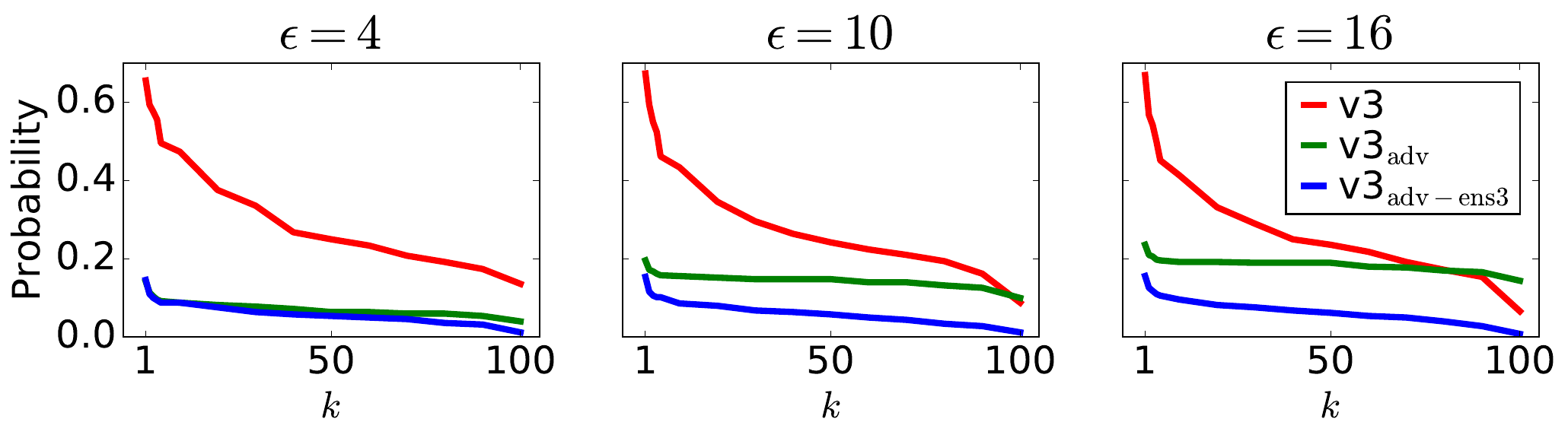}
	\vspace{-2em}
	\caption{\textbf{The dimensionality of the adversarial cone}. For $500$ correctly classified points $x$, and for $\epsilon \in \{4, 10, 16\}$, we plot the probability that we find \emph{at least} $k$ orthogonal vectors $r_i$ such that $\|r_i\|_\infty = \epsilon$ and $x+r_i$ is misclassified.
	For $\epsilon \geq 10$, model v3$_{\text{adv}}$ shows a \emph{bimodal} phenomenon: most points $x$ either have $0$ adversarial directions or more than $90$.\\[-2em]}
	\label{fig:gaas}
\end{figure}
\section{Conclusion and Future Work}
\label{sec:conclusion}

Previous work on adversarial training at scale has produced encouraging results,
showing strong robustness to (single-step)
adversarial examples~\citep{goodfellow2014explaining, kurakin2016scale}. 
Yet, these 
results are misleading, as the adversarially trained models 
remain vulnerable to simple black-box and white-box attacks.
Our results, generic with respect to the application domain, suggest that adversarial 
training can be improved by \emph{decoupling} the generation of adversarial examples from the model being trained.
Our experiments with Ensemble Adversarial Training show that the robustness attained to attacks from some models \emph{transfers} to attacks from other models.

We did not consider black-box adversaries that attack a model via 
other means than by transferring examples from a local model.
For instance, generative techniques~\citep{baluja2017adversarial} 
might provide an avenue for stronger attacks. Yet, a recent work
by~\citet{xiao2018generating} found Ensemble Adversarial Training to 
be resilient to such attacks on MNIST and CIFAR10, and often attaining 
higher robustness than models that were adversarially trained on iterative attacks.

Moreover, \emph{interactive} adversaries (see
Appendix~\ref{apx:formal-threat-model}) could try to 
exploit queries to the target model's prediction function 
in their attack, as demonstrated in~\citet{papernot2016practical}.
If queries to the target model yield \emph{prediction confidences}, 
an adversary can estimate the target's gradient at a given point (e.g., using finite-differences as in~\citet{chen2017zoo}) 
and fool the target with our R+FGSM attack.
Note that if queries only return the predicted label, the attack does not apply.
Exploring the impact of these classes of black-box attacks and evaluating their 
scalability to complex tasks is an interesting avenue for future work.


\section*{Acknowledgments}

We thank Ben Poole and Jacob Steinhardt for feedback 
on early versions of this work.
Nicolas Papernot is supported
by a Google PhD Fellowship in Security. 
Research was supported in part by the Army Research Laboratory,
under Cooperative Agreement Number W911NF-13-2-0045 (ARL Cyber Security
CRA), and the Army Research Office under grant W911NF-13-1-0421. 
The views and conclusions contained in this document are those of the
authors and should not be interpreted as representing the official policies,
either expressed or implied, of the Army Research Laboratory or the U.S.
Government. The U.S.\ Government is authorized to reproduce and distribute
reprints for government purposes notwithstanding any copyright notation hereon.

{
\bibliography{biblio.bib}

\begin{thebibliography}{51}
\providecommand{\natexlab}[1]{#1}
\providecommand{\url}[1]{\texttt{#1}}
\expandafter\ifx\csname urlstyle\endcsname\relax
  \providecommand{\doi}[1]{doi: #1}\else
  \providecommand{\doi}{doi: \begingroup \urlstyle{rm}\Url}\fi

\bibitem[Abadi et~al.(2015)Abadi, Agarwal, Barham, Brevdo, Chen, Citro,
  Corrado, Davis, Dean, Devin, Ghemawat, Goodfellow, Harp, Irving, Isard, Jia,
  Jozefowicz, Kaiser, Kudlur, Levenberg, Man\'{e}, Monga, Moore, Murray, Olah,
  Schuster, Shlens, Steiner, Sutskever, Talwar, Tucker, Vanhoucke, Vasudevan,
  Vi\'{e}gas, Vinyals, Warden, Wattenberg, Wicke, Yu, and
  Zheng]{tensorflow2015-whitepaper}
Mart\'{\i}n Abadi, Ashish Agarwal, Paul Barham, Eugene Brevdo, Zhifeng Chen,
  Craig Citro, Greg~S. Corrado, Andy Davis, Jeffrey Dean, Matthieu Devin,
  Sanjay Ghemawat, Ian Goodfellow, Andrew Harp, Geoffrey Irving, Michael Isard,
  Yangqing Jia, Rafal Jozefowicz, Lukasz Kaiser, Manjunath Kudlur, Josh
  Levenberg, Dan Man\'{e}, Rajat Monga, Sherry Moore, Derek Murray, Chris Olah,
  Mike Schuster, Jonathon Shlens, Benoit Steiner, Ilya Sutskever, Kunal Talwar,
  Paul Tucker, Vincent Vanhoucke, Vijay Vasudevan, Fernanda Vi\'{e}gas, Oriol
  Vinyals, Pete Warden, Martin Wattenberg, Martin Wicke, Yuan Yu, and Xiaoqiang
  Zheng.
\newblock {TensorFlow}: Large-scale machine learning on heterogeneous systems,
  2015.
\newblock URL \url{https://www.tensorflow.org/}.
\newblock Software available from tensorflow.org.

\bibitem[Amodei et~al.(2016)Amodei, Olah, Steinhardt, Christiano, Schulman, and
  Man{\'e}]{amodei2016concrete}
Dario Amodei, Chris Olah, Jacob Steinhardt, Paul Christiano, John Schulman, and
  Dan Man{\'e}.
\newblock Concrete problems in ai safety.
\newblock \emph{arXiv preprint arXiv:1606.06565}, 2016.

\bibitem[Athalye et~al.(2018)Athalye, Carlini, and
  Wagner]{athalye2018obfuscated}
Anish Athalye, Nicholas Carlini, and David Wagner.
\newblock Obfuscated gradients give a false sense of security: Circumventing
  defenses to adversarial examples.
\newblock \emph{arXiv preprint arXiv:1802.00420}, 2018.

\bibitem[Baluja \& Fischer(2017)Baluja and Fischer]{baluja2017adversarial}
Shumeet Baluja and Ian Fischer.
\newblock Adversarial transformation networks: Learning to generate adversarial
  examples.
\newblock \emph{arXiv preprint arXiv:1703.09387}, 2017.

\bibitem[Biggio et~al.(2013)Biggio, Corona, Maiorca, Nelson, {\v{S}}rndi{\'c},
  Laskov, Giacinto, and Roli]{biggio2013evasion}
Battista Biggio, Igino Corona, Davide Maiorca, Blaine Nelson, Nedim
  {\v{S}}rndi{\'c}, Pavel Laskov, Giorgio Giacinto, and Fabio Roli.
\newblock Evasion attacks against machine learning at test time.
\newblock In \emph{ECML-KDD}, pp.\  387--402. Springer, 2013.

\bibitem[Brendel \& Bethge(2017)Brendel and Bethge]{brendel2017comment}
Wieland Brendel and Matthias Bethge.
\newblock Comment on" biologically inspired protection of deep networks from
  adversarial attacks".
\newblock \emph{arXiv preprint arXiv:1704.01547}, 2017.

\bibitem[Buckman et~al.(2018)Buckman, Roy, Raffel, and
  Goodfellow]{buckman2018thermometer}
Jacob Buckman, Aurko Roy, Colin Raffel, and Ian Goodfellow.
\newblock Thermometer encoding: One hot way to resist adversarial examples.
\newblock In \emph{International Conference on Learning Representations}, 2018.
\newblock URL \url{https://openreview.net/forum?id=S18Su--CW}.

\bibitem[Carlini \& Wagner(2017{\natexlab{a}})Carlini and
  Wagner]{carlini2016towards}
Nicholas Carlini and David Wagner.
\newblock Towards evaluating the robustness of neural networks.
\newblock In \emph{IEEE Symposium on Security and Privacy}, 2017{\natexlab{a}}.

\bibitem[Carlini \& Wagner(2017{\natexlab{b}})Carlini and
  Wagner]{carlini2017adversarial}
Nicholas Carlini and David Wagner.
\newblock Adversarial examples are not easily detected: Bypassing ten detection
  methods.
\newblock \emph{arXiv preprint arXiv:1705.07263}, 2017{\natexlab{b}}.

\bibitem[Chen et~al.(2017)Chen, Zhang, Sharma, Yi, and Hsieh]{chen2017zoo}
Pin-Yu Chen, Huan Zhang, Yash Sharma, Jinfeng Yi, and Cho-Jui Hsieh.
\newblock Zoo: Zeroth order optimization based black-box attacks to deep neural
  networks without training substitute models.
\newblock In \emph{Proceedings of the 10th ACM Workshop on Artificial
  Intelligence and Security}, pp.\  15--26. ACM, 2017.

\bibitem[Cisse et~al.(2017)Cisse, Piotr, Edouard, Yann, and
  Nicolas]{cisse2017parseval}
Moustapha Cisse, Bojanowski Piotr, Grave Edouard, Dauphin Yann, and Usunier
  Nicolas.
\newblock Parseval networks: Improving robustness to adversarial examples.
\newblock \emph{arXiv preprint arXiv:1704.08847}, 2017.

\bibitem[Cohen et~al.(2019)Cohen, Rosenfeld, and Kolter]{cohen2019certified}
Jeremy Cohen, Elan Rosenfeld, and Zico Kolter.
\newblock Certified adversarial robustness via randomized smoothing.
\newblock In \emph{International Conference on Machine Learning}, pp.\
  1310--1320, 2019.

\bibitem[Colbourn(2010)]{colbourn2010crc}
Charles~J Colbourn.
\newblock \emph{CRC handbook of combinatorial designs}.
\newblock CRC press, 2010.

\bibitem[Deng et~al.(2009)Deng, Dong, Socher, Li, Li, and Fei-Fei]{imagenet}
J.~Deng, W.~Dong, R.~Socher, L.-J. Li, K.~Li, and L.~Fei-Fei.
\newblock {ImageNet: A Large-Scale Hierarchical Image Database}.
\newblock In \emph{CVPR09}, 2009.

\bibitem[Dong et~al.(2018)Dong, Liao, Pang, Su, Zhu, Hu, and
  Li]{dong2018boosting}
Yinpeng Dong, Fangzhou Liao, Tianyu Pang, Hang Su, Jun Zhu, Xiaolin Hu, and
  Jianguo Li.
\newblock Boosting adversarial attacks with momentum.
\newblock In \emph{Proceedings of the IEEE conference on computer vision and
  pattern recognition}, pp.\  9185--9193, 2018.

\bibitem[Dong et~al.(2019)Dong, Pang, Su, and Zhu]{dong2019evading}
Yinpeng Dong, Tianyu Pang, Hang Su, and Jun Zhu.
\newblock Evading defenses to transferable adversarial examples by
  translation-invariant attacks.
\newblock In \emph{Proceedings of the IEEE Conference on Computer Vision and
  Pattern Recognition}, pp.\  4312--4321, 2019.

\bibitem[Engstrom et~al.(2017)Engstrom, Tsipras, Schmidt, and
  Madry]{engstrom2017rotation}
Logan Engstrom, Dimitris Tsipras, Ludwig Schmidt, and Aleksander Madry.
\newblock A rotation and a translation suffice: Fooling cnns with simple
  transformations.
\newblock \emph{arXiv preprint arXiv:1712.02779}, 2017.

\bibitem[Goodfellow et~al.(2014{\natexlab{a}})Goodfellow, Pouget-Abadie, Mirza,
  Xu, Warde-Farley, Ozair, Courville, and Bengio]{goodfellow2014generative}
Ian Goodfellow, Jean Pouget-Abadie, Mehdi Mirza, Bing Xu, David Warde-Farley,
  Sherjil Ozair, Aaron Courville, and Yoshua Bengio.
\newblock Generative adversarial nets.
\newblock In \emph{Advances in neural information processing systems}, pp.\
  2672--2680, 2014{\natexlab{a}}.

\bibitem[Goodfellow et~al.(2014{\natexlab{b}})Goodfellow, Shlens, and
  Szegedy]{goodfellow2014explaining}
Ian~J Goodfellow, Jonathon Shlens, and Christian Szegedy.
\newblock Explaining and harnessing adversarial examples.
\newblock \emph{arXiv preprint arXiv:1412.6572}, 2014{\natexlab{b}}.

\bibitem[Gu \& Rigazio(2014)Gu and Rigazio]{gu2014towards}
Shixiang Gu and Luca Rigazio.
\newblock Towards deep neural network architectures robust to adversarial
  examples.
\newblock \emph{arXiv preprint arXiv:1412.5068}, 2014.

\bibitem[He et~al.(2016)He, Zhang, Ren, and Sun]{he2016deep}
Kaiming He, Xiangyu Zhang, Shaoqing Ren, and Jian Sun.
\newblock Deep residual learning for image recognition.
\newblock In \emph{Proceedings of the IEEE conference on computer vision and
  pattern recognition}, pp.\  770--778, 2016.

\bibitem[Kolter \& Wong(2017)Kolter and Wong]{kolter2017provable}
J~Zico Kolter and Eric Wong.
\newblock Provable defenses against adversarial examples via the convex outer
  adversarial polytope.
\newblock \emph{arXiv preprint arXiv:1711.00851}, 2017.

\bibitem[Kurakin et~al.(2017{\natexlab{a}})Kurakin, Goodfellow, and
  Bengio]{kurakin2016adversarial}
Alexey Kurakin, Ian Goodfellow, and Samy Bengio.
\newblock Adversarial examples in the physical world.
\newblock In \emph{ICLR}, 2017{\natexlab{a}}.

\bibitem[Kurakin et~al.(2017{\natexlab{b}})Kurakin, Goodfellow, and
  Bengio]{kurakin2016scale}
Alexey Kurakin, Ian Goodfellow, and Samy Bengio.
\newblock Adversarial machine learning at scale.
\newblock In \emph{ICLR}, 2017{\natexlab{b}}.

\bibitem[Kurakin et~al.(2017{\natexlab{c}})Kurakin, Goodfellow, and
  Bengio]{kaggle}
Alexey Kurakin, Ian~J Goodfellow, and Samy Bengio.
\newblock Nips 2017: Defense against adversarial attack, 2017{\natexlab{c}}.
\newblock URL
  \url{https://www.kaggle.com/c/nips-2017-defense-against-adversarial-attack}.

\bibitem[LeCun et~al.(1998)LeCun, Bottou, Bengio, and
  Haffner]{lecun1998gradient}
Yann LeCun, L{\'e}on Bottou, Yoshua Bengio, and Patrick Haffner.
\newblock Gradient-based learning applied to document recognition.
\newblock \emph{Proceedings of the IEEE}, 86\penalty0 (11):\penalty0
  2278--2324, 1998.

\bibitem[Liu et~al.(2017)Liu, Chen, Liu, and Song]{liu2016delving}
Yanpei Liu, Xinyun Chen, Chang Liu, and Dawn Song.
\newblock Delving into transferable adversarial examples and black-box attacks.
\newblock In \emph{ICLR}, 2017.

\bibitem[Luo et~al.(2015)Luo, Boix, Roig, Poggio, and Zhao]{luo2015foveation}
Yan Luo, Xavier Boix, Gemma Roig, Tomaso Poggio, and Qi~Zhao.
\newblock Foveation-based mechanisms alleviate adversarial examples.
\newblock \emph{arXiv preprint arXiv:1511.06292}, 2015.

\bibitem[Madry et~al.(2017)Madry, Makelov, Schmidt, Tsipras, and
  Vladu]{madry2017towards}
Aleksander Madry, Aleksandar Makelov, Ludwig Schmidt, Dimitris Tsipras, and
  Adrian Vladu.
\newblock Towards deep learning models resistant to adversarial attacks.
\newblock \emph{arXiv preprint arXiv:1706.06083}, 2017.

\bibitem[Mansour et~al.(2009)Mansour, Mohri, and
  Rostamizadeh]{mansour2009domain}
Yishay Mansour, Mehryar Mohri, and Afshin Rostamizadeh.
\newblock Domain adaptation: Learning bounds and algorithms.
\newblock \emph{arXiv preprint arXiv:0902.3430}, 2009.

\bibitem[Mishkin et~al.(2017)Mishkin, Sergievskiy, and
  Matas]{mishkin2017systematic}
Dmytro Mishkin, Nikolay Sergievskiy, and Jiri Matas.
\newblock Systematic evaluation of convolution neural network advances on the
  imagenet.
\newblock \emph{Computer Vision and Image Understanding}, 2017.

\bibitem[Nayebi \& Ganguli(2017)Nayebi and Ganguli]{nayebi2017biologically}
Aran Nayebi and Surya Ganguli.
\newblock Biologically inspired protection of deep networks from adversarial
  attacks.
\newblock \emph{arXiv preprint arXiv:1703.09202}, 2017.

\bibitem[Papernot et~al.(2016{\natexlab{a}})Papernot, McDaniel, Jha,
  Fredrikson, Celik, and Swami]{papernot2016limitations}
Nicolas Papernot, Patrick McDaniel, Somesh Jha, Matt Fredrikson, Z~Berkay
  Celik, and Ananthram Swami.
\newblock The limitations of deep learning in adversarial settings.
\newblock In \emph{Security and Privacy (EuroS\&P), 2016 IEEE European
  Symposium on}, pp.\  372--387. IEEE, 2016{\natexlab{a}}.

\bibitem[Papernot et~al.(2016{\natexlab{b}})Papernot, McDaniel, Sinha, and
  Wellman]{papernot2016towards}
Nicolas Papernot, Patrick McDaniel, Arunesh Sinha, and Michael Wellman.
\newblock Towards the science of security and privacy in machine learning.
\newblock \emph{arXiv preprint arXiv:1611.03814}, 2016{\natexlab{b}}.

\bibitem[Papernot et~al.(2016{\natexlab{c}})Papernot, McDaniel, Wu, Jha, and
  Swami]{papernot2016distillation}
Nicolas Papernot, Patrick McDaniel, Xi~Wu, Somesh Jha, and Ananthram Swami.
\newblock Distillation as a defense to adversarial perturbations against deep
  neural networks.
\newblock In \emph{Security and Privacy (SP), 2016 IEEE Symposium on}, pp.\
  582--597. IEEE, 2016{\natexlab{c}}.

\bibitem[Papernot et~al.(2017)Papernot, McDaniel, Goodfellow, Jha, Celik, and
  Swami]{papernot2016practical}
Nicolas Papernot, Patrick McDaniel, Ian Goodfellow, Somesh Jha, Z~Berkay Celik,
  and Ananthram Swami.
\newblock Practical black-box attacks against machine learning.
\newblock In \emph{Asia Conference on Computer and Communications Security
  (ASIACCS)}, pp.\  506--519. ACM, 2017.

\bibitem[Raghunathan et~al.(2018)Raghunathan, Steinhardt, and
  Liang]{raghunathan2018certified}
Aditi Raghunathan, Jacob Steinhardt, and Percy Liang.
\newblock Certified defenses against adversarial examples.
\newblock In \emph{International Conference on Learning Representations}, 2018.
\newblock URL \url{https://openreview.net/forum?id=Bys4ob-Rb}.

\bibitem[Sinha et~al.(2018)Sinha, Namkoong, and Duchi]{sinha2018certifiable}
Aman Sinha, Hongseok Namkoong, and John Duchi.
\newblock Certifiable distributional robustness with principled adversarial
  training.
\newblock In \emph{International Conference on Learning Representations}, 2018.
\newblock URL \url{https://openreview.net/forum?id=Hk6kPgZA-}.

\bibitem[Srivastava et~al.(2014)Srivastava, Hinton, Krizhevsky, Sutskever, and
  Salakhutdinov]{srivastava2014dropout}
Nitish Srivastava, Geoffrey~E Hinton, Alex Krizhevsky, Ilya Sutskever, and
  Ruslan Salakhutdinov.
\newblock Dropout: a simple way to prevent neural networks from overfitting.
\newblock \emph{Journal of machine learning research}, 15\penalty0
  (1):\penalty0 1929--1958, 2014.

\bibitem[Szegedy et~al.(2013)Szegedy, Zaremba, Sutskever, Bruna, Erhan,
  Goodfellow, and Fergus]{szegedy2013intriguing}
Christian Szegedy, Wojciech Zaremba, Ilya Sutskever, Joan Bruna, Dumitru Erhan,
  Ian Goodfellow, and Rob Fergus.
\newblock Intriguing properties of neural networks.
\newblock \emph{arXiv preprint arXiv:1312.6199}, 2013.

\bibitem[Szegedy et~al.(2016{\natexlab{a}})Szegedy, Ioffe, Vanhoucke, and
  Alemi]{szegedy2016inception}
Christian Szegedy, Sergey Ioffe, Vincent Vanhoucke, and Alex Alemi.
\newblock Inception-v4, inception-resnet and the impact of residual connections
  on learning.
\newblock \emph{arXiv preprint arXiv:1602.07261}, 2016{\natexlab{a}}.

\bibitem[Szegedy et~al.(2016{\natexlab{b}})Szegedy, Vanhoucke, Ioffe, Shlens,
  and Wojna]{szegedy2016rethinking}
Christian Szegedy, Vincent Vanhoucke, Sergey Ioffe, Jon Shlens, and Zbigniew
  Wojna.
\newblock Rethinking the inception architecture for computer vision.
\newblock In \emph{CVPR}, pp.\  2818--2826, 2016{\natexlab{b}}.

\bibitem[Tram{\`e}r et~al.(2016)Tram{\`e}r, Zhang, Juels, Reiter, and
  Ristenpart]{tramer2016stealing}
Florian Tram{\`e}r, Fan Zhang, Ari Juels, Michael~K Reiter, and Thomas
  Ristenpart.
\newblock Stealing machine learning models via prediction apis.
\newblock In \emph{Usenix Security}, 2016.

\bibitem[Tram{\`e}r et~al.(2017)Tram{\`e}r, Papernot, Goodfellow, Boneh, and
  McDaniel]{tramer2017space}
Florian Tram{\`e}r, Nicolas Papernot, Ian Goodfellow, Dan Boneh, and Patrick
  McDaniel.
\newblock The space of transferable adversarial examples.
\newblock \emph{arXiv preprint arXiv:1704.03453}, 2017.

\bibitem[Wong et~al.(2020)Wong, Rice, and Kolter]{wong2020fast}
Eric Wong, Leslie Rice, and J~Zico Kolter.
\newblock Fast is better than free: Revisiting adversarial training.
\newblock In \emph{International Conference on Learning Representations}, 2020.

\bibitem[Wu et~al.(2020)Wu, Wang, Xia, Bailey, and Ma]{wu2020skip}
Dongxian Wu, Yisen Wang, Shu-Tao Xia, James Bailey, and Xingjun Ma.
\newblock Skip connections matter: On the transferability of adversarial
  examples generated with resnets.
\newblock In \emph{International Conference on Learning Representations}, 2020.

\bibitem[Xiao et~al.(2018)Xiao, Li, Zhu, He, Liu, and Song]{xiao2018generating}
Chaowei Xiao, Bo~Li, Jun-Yan Zhu, Warren He, Mingyan Liu, and Dawn Song.
\newblock Generating adversarial examples with adversarial networks, 2018.
\newblock URL \url{https://openreview.net/forum?id=HknbyQbC-}.

\bibitem[Xie et~al.(2018)Xie, Wang, Zhang, Ren, and Yuille]{xie2018mitigating}
Cihang Xie, Jianyu Wang, Zhishuai Zhang, Zhou Ren, and Alan Yuille.
\newblock Mitigating adversarial effects through randomization.
\newblock In \emph{International Conference on Learning Representations}, 2018.
\newblock URL \url{https://openreview.net/forum?id=Sk9yuql0Z}.

\bibitem[Xie et~al.(2019{\natexlab{a}})Xie, Wu, Maaten, Yuille, and
  He]{xie2019feature}
Cihang Xie, Yuxin Wu, Laurens van~der Maaten, Alan~L Yuille, and Kaiming He.
\newblock Feature denoising for improving adversarial robustness.
\newblock In \emph{Proceedings of the IEEE Conference on Computer Vision and
  Pattern Recognition}, pp.\  501--509, 2019{\natexlab{a}}.

\bibitem[Xie et~al.(2019{\natexlab{b}})Xie, Zhang, Zhou, Bai, Wang, Ren, and
  Yuille]{xie2019improving}
Cihang Xie, Zhishuai Zhang, Yuyin Zhou, Song Bai, Jianyu Wang, Zhou Ren, and
  Alan~L Yuille.
\newblock Improving transferability of adversarial examples with input
  diversity.
\newblock In \emph{Proceedings of the IEEE Conference on Computer Vision and
  Pattern Recognition}, pp.\  2730--2739, 2019{\natexlab{b}}.

\bibitem[Zhang et~al.(2012)Zhang, Zhang, and Ye]{zhang2012generalization}
Chao Zhang, Lei Zhang, and Jieping Ye.
\newblock Generalization bounds for domain adaptation.
\newblock In \emph{Advances in neural information processing systems}, pp.\
  3320--3328, 2012.

\end{thebibliography}
\bibliographystyle{iclr2018_conference}
}

\newpage
\appendix
\section{Threat Model: Formal Definitions}
\label{apx:formal-threat-model}

We provide formal definitions for the threat model introduced 
in Section~\ref{ssec:threat-model}. In the following, we 
explicitly identify the hypothesis space $\hyp$ that a model 
belongs to as describing the model's \emph{architecture}.
We consider a target model $h \in \hyp$ trained 
over inputs $(x, y_{\text{true}})$ sampled from a data distribution 
$\dist$. More precisely, we write
\[
h \gets \train(\hyp, X_{\text{train}}, Y_{\text{train}}, r) \;,
\]
where $\train$ is a \emph{randomized training procedure} that takes in 
a description of the model architecture $\hyp$, a training set 
$X_{\text{train}},Y_{\text{train}}$ sampled from $\dist$, and 
randomness $r$.

Given a set of test inputs $X,Y = \{(x_1, y_1), \dots, (x_m, y_m)\}$ from $\dist$ and a budget $\epsilon>0$,
an adversary $\advA$ produces adversarial examples 
$X^{\adv} = \{x^{\adv}_1, \dots, x^{\adv}_m\}$, such that $\|x_i - x^{\adv}_i\|_\infty \leq \epsilon$ for all $i \in [1,m]$.
We evaluate success of the attack as the error rate of 
the target model over $X^{\adv}$:
\[
\frac{1}{m} \sum_{i=1}^{m}
	\mathbbm{1}(\argmax h(x^{\adv}_i) \neq y_i) \;.
\]
We assume $\advA$ can sample inputs according to the 
data distribution $\dist$. We define three adversaries.

\begin{definition}[White-Box Adversary]
	For a target model $h \in \hyp$, 
	a white-box adversary is given access to all elements of the 
	training procedure, that is $\train$ (the training algorithm), 
	$\hyp$ (the model architecture), the training data $X_{\text{train}},Y_{\text{train}}$, the 
	randomness $r$ and the parameters $h$.
	The adversary can use any attack (e.g., those
	in Section~\ref{ssec:minmax}) to find adversarial 
	inputs.
\end{definition}

White-box access to the internal model weights corresponds to a very 
strong adversarial model. We thus also consider the following relaxed and 
arguably more realistic notion of a \emph{black-box} adversary.

\begin{definition}[\emph{Non-Interactive} Black-Box Adversary]
	For a target model $h \in \hyp$, 
	a non-interactive black-box adversary only gets access to
	$\train$ (the target model's training procedure) and 
	$\hyp$ (the model architecture).
	The adversary can sample from the
	data distribution $\dist$, and uses a local algorithm 
	to craft adversarial examples $X^{\adv}$.
\end{definition} 

Attacks based on transferability~\citep{szegedy2013intriguing} fall in this category, wherein the adversary selects a procedure $\train'$ and 
model architecture $\hyp'$, trains a local
model $h'$ over $\dist$, and computes adversarial examples on its local model $h'$ using white-box attack strategies.

Most importantly, a black-box adversary does not learn the 
randomness $r$ used to train the target, nor the target's parameters 
$h$. The black-box adversaries in our paper are actually slightly 
\emph{stronger} than the ones defined above, in that they use the same 
training data $X_{\text{train}}, Y_{\text{train}}$ as the target model.

We provide $\advA$ with the target's training procedure $\train$ 
to capture knowledge of \emph{defensive strategies} 
applied at training time, e.g., adversarial 
training~\citep{szegedy2013intriguing, goodfellow2014explaining} or 
ensemble adversarial training (see Section~\ref{ssec:results:eat}).
For ensemble adversarial training, 
$\advA$ also knows the architectures of all pre-trained models.
In this work, we always mount black-box attacks that train a local 
model with a \emph{different} architecture than the target 
model. We actually find that black-box attacks on adversarially 
trained models are stronger in this case
(see Table~\ref{table:black-box}).

The main focus of our paper is on non-interactive black-box adversaries 
as defined above. For completeness, we also formalize a stronger 
notion of \emph{interactive} black-box adversaries that additionally 
issue prediction queries to the target model~\citep{papernot2016practical}.
We note that in cases where ML models are deployed as part of a larger system (e.g., a self driving car), an adversary may not have direct access to the model's query interface.

\begin{definition}[\emph{Interactive} Black-Box Adversary]
	For a target model $h \in \hyp$, 
	an interactive black-box adversary only gets access to $\train$ 
	(the target model's training procedure) and 
	$\hyp$ (the model architecture).
	The adversary issues (adaptive) oracle queries 
	to the target model. That is, for arbitrary inputs $x \in [0,1]^d$, 
	the adversary obtains $y = \argmax h(x)$ and uses a local algorithm to
	craft adversarial examples (given knowledge of $\hyp$, $\train$, and tuples $(x,y)$).
\end{definition}

\citet{papernot2016practical} show that such attacks 
are possible even if the adversary only gets access to a small number 
of samples from $\dist$.
Note that if the target model's prediction interface additionally 
returns class scores $h(x)$,
interactive black-box adversaries could use  
queries to the target model to estimate the model's 
gradient (e.g., using finite differences)~\citep{chen2017zoo}, 
and then apply the attacks 
in Section~\ref{ssec:minmax}. We further discuss interactive black-box 
attack strategies in Section~\ref{sec:conclusion}.

\section{Generalization Bound for ensemble Adversarial Training}
\label{appendix:domain-adaptation}

We provide a formal statement of Theorem~\ref{thm:generalization-informal} in Section~\ref{ssec:eat}, regarding the generalization guarantees of Ensemble Adversarial Training. For simplicity, we assume that the model is trained solely on adversarial examples computed on the pre-trained models (i.e., we ignore the clean training data and the adversarial examples computed on the model being trained). Our results are easily extended to also consider these data points.

Let $\dist$ be the data distribution and $\mathcal{A}_1, \dots, \mathcal{A}_k, \mathcal{A}^*$ be adversarial distributions where a sample $(x, y)$ is obtained by sampling $(x, y_{\text{true}})$ from $ \dist$, computing an $x^{\adv}$ such that $\| x^{\adv} - x \|_\infty \leq \epsilon$ and returning $(x^{\adv}, y_{\text{true}})$.
We assume the model is trained on $N$ data points $Z_{\text{train}}$, where $\frac{N}{k}$ data points are sampled from each distribution $\mathcal{A}_i$, for $1 \leq i \leq k$. We denote $\mathcal{A}_{\text{train}} = \{\mathcal{A}_1, \dots, \mathcal{A}_k\}$. At test time, the model is evaluated on adversarial examples from $\mathcal{A}^*$.

For a model $h \in  \hyp$ we define the empirical risk
\begin{equation}
\hat{R}(h, \mathcal{A}_{\text{train}}) \coloneqq \frac{1}{N} \sum_{(x^{\adv},y_{\text{true}}) \in Z_{\text{train}}} L(h(x^{\adv}), y_\text{true}) \;,
\end{equation}
and the risk over the target distribution (or future adversary)
\begin{equation}
R(h, \mathcal{A}^*) \coloneqq \Exp_{(x^{\adv}, y_\text{true}) \sim \mathcal{A}^*} [L(h(x^{\adv}), y_\text{true})] \;.
\end{equation}
%
We further define the average \emph{discrepancy distance}~\citep{mansour2009domain} between distributions $\mathcal{A}_i$ and $\mathcal{A}^*$ with respect to a hypothesis space $\hyp$ as
\begin{equation}
\text{disc}_{\hyp}(\mathcal{A}_{\text{train}}, \mathcal{A}^*) \coloneqq
\frac{1}{k} \sum_{i=1}^k 
\sup_{h_1, h_2 \in \hyp} 
\left| 
\Exp_{\mathcal{A}_i} [\mathbbm{1}_{\{h_1(x^{\adv}) = h_2(x^{\adv})\}}] 
- 
\Exp_{\mathcal{A}^*} [\mathbbm{1}_{\{h_1(x^{\adv}) = h_2(x^{\adv})\}}] 
\right| 
\;.
\end{equation}
This quantity characterizes how ``different'' the future adversary is from the train-time adversaries. Intuitively, the distance $\text{disc}(\mathcal{A}_{\text{train}}, \mathcal{A}^*)$ is small if the difference in robustness between two models to the target attack $\mathcal{A}^*$ is somewhat similar to the difference in robustness between these two models to the attacks used for training (e.g., if the static black-box attacks $\mathcal{A}_i$ induce much higher error on some model $h_1$ than on another model $h_2$, then the same should hold for the target attack $\mathcal{A}^*$).  In other words, the \emph{ranking} of the robustness of models $h \in \hyp$ should be similar for the attacks in $\mathcal{A}_{\text{train}}$ as for $\mathcal{A}^*$.

Finally, let $R_N(\hyp)$ be the average \emph{Rademacher complexity} of the distributions $\mathcal{A}_1, \dots, \mathcal{A}_k$~\citep{zhang2012generalization}. Note that $R_N(\hyp) \rightarrow 0$ as $N \rightarrow \infty$.
The following theorem is a corollary of~\citet[Theorem 5.2]{zhang2012generalization}:

\begin{theorem}
	 Assume that $\hyp$ is a function class consisting of bounded functions. Then, with probability at least $1-\epsilon$,
	 \begin{equation}
	 \sup_{h \in \hyp} |\hat{R}(h, \mathcal{A}_{\text{train}}) - R(h, \mathcal{A}^*)| \leq
	 \text{disc}_{\hyp}(\mathcal{A}_{\text{train}}, \mathcal{A}^*) 
	 + 2 R_N(\hyp) 
	 + O\left(\sqrt{\frac{\ln(1/\epsilon)}{N}}\right) \;.
	 \end{equation}
\end{theorem}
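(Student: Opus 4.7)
The plan is to invoke Theorem 5.2 of Zhang et al.\ (2012) essentially as a black box, after verifying that our setting (a finite number $k$ of adversarial source distributions, each contributing $N/k$ i.i.d.\ samples, and a single target adversary $\mathcal{A}^*$) fits their multiple-source domain adaptation framework. The high-level strategy is a triangle-inequality decomposition that separates a \emph{generalization} component on the source mixture from a \emph{domain adaptation} component that compares the source mixture to the target.

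First I would introduce the population source risk $\bar{R}(h,\mathcal{A}_{\text{train}}) \coloneqq \tfrac{1}{k}\sum_{i=1}^k R(h,\mathcal{A}_i)$ and write
\begin{equation}
\hat{R}(h,\mathcal{A}_{\text{train}}) - R(h,\mathcal{A}^*) = \bigl(\hat{R}(h,\mathcal{A}_{\text{train}}) - \bar{R}(h,\mathcal{A}_{\text{train}})\bigr) + \bigl(\bar{R}(h,\mathcal{A}_{\text{train}}) - R(h,\mathcal{A}^*)\bigr).
\end{equation}
For the first bracket, since the $N$ training samples are drawn independently with $N/k$ from each $\mathcal{A}_i$, $\hat{R}(h,\mathcal{A}_{\text{train}})$ is an unbiased estimator of $\bar{R}(h,\mathcal{A}_{\text{train}})$. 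Standard symmetrization combined with a McDiarmid bounded-differences argument, adapted to stratified sampling exactly as in the Zhang et al.\ proof, then yields with probability at least $1-\epsilon$,
\begin{equation}
\sup_{h \in \hyp} \bigl|\hat{R}(h,\mathcal{A}_{\text{train}}) - \bar{R}(h,\mathcal{A}_{\text{train}})\bigr| \;\leq\; 2R_N(\hyp) + O\!\left(\sqrt{\tfrac{\ln(1/\epsilon)}{N}}\right),
\end{equation}
which requires only the stated boundedness of $\hyp$ (to verify McDiarmid's condition) and uses that $R_N(\hyp)$ is defined as the average Rademacher complexity across the $k$ source distributions.

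For the second bracket, applying the triangle inequality across the $k$ sources gives
\begin{equation}
\sup_{h\in\hyp}\bigl|\bar{R}(h,\mathcal{A}_{\text{train}}) - R(h,\mathcal{A}^*)\bigr| \;\leq\; \tfrac{1}{k}\sum_{i=1}^k \sup_{h \in \hyp} \bigl|R(h,\mathcal{A}_i) - R(h,\mathcal{A}^*)\bigr|,
\end{equation}
and each per-source supremum is precisely controlled by a single-source discrepancy between $\mathcal{A}_i$ and $\mathcal{A}^*$. Summing these contributions reproduces, by definition, the term $\text{disc}_{\hyp}(\mathcal{A}_{\text{train}},\mathcal{A}^*)$. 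Combining the two brackets by a final triangle inequality and taking the supremum over $h \in \hyp$ produces the claimed bound.

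The main obstacle will be bridging the two ``loss functions'' that coexist in the statement: the empirical and population risks use the training loss $L$ (e.g.\ cross-entropy or classification error), whereas $\text{disc}_{\hyp}$ is written in terms of the $0/1$ agreement indicator $\mathbbm{1}_{\{h_1(x^{\adv}) = h_2(x^{\adv})\}}$. In the original Mansour et al.\ and Zhang et al.\ treatments the discrepancy is defined with the same loss used in the risk, so the reduction here is immediate only when $L$ is itself the classification loss; for a general bounded surrogate one needs a Lipschitz or consistency relationship linking $L$ to $0/1$ disagreement between hypotheses. I would handle this by either restricting $L$ to the $0/1$ loss in the formal statement or by inserting a boundedness/Lipschitz assumption so that expected loss gaps are controlled by expected classifier-disagreement gaps; this is the only nontrivial step beyond quoting the domain-adaptation machinery.
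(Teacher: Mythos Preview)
Your proposal is correct and matches the paper's approach: the paper states the result simply as ``a corollary of \citet[Theorem 5.2]{zhang2012generalization}'' without any further argument, so your black-box invocation together with the triangle-inequality decomposition is exactly what is intended. Your observation about the mismatch between the loss $L$ in the risks and the $0/1$ agreement indicator in $\text{disc}_{\hyp}$ is a genuine technical point that the paper does not address; it is implicitly resolved only by taking $L$ to be the classification loss (or by absorbing the gap into the $O(\cdot)$ slack), so your plan to either restrict $L$ or add a Lipschitz/consistency assumption is the right way to make the statement precise.
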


Compared to the standard generalization bound for supervised learning, the generalization bound for Domain Adaptation incorporates the extra term $\text{disc}_{\hyp}(\mathcal{A}_{\text{train}}, \mathcal{A}^*)$ to capture the divergence between the target and source distributions. In our context, this means that the model $h^*$ learned by Ensemble Adversarial Training has guaranteed generalization bounds with respect to future adversaries that are not ``too different'' from the ones used during training. Note that $\mathcal{A}^*$ need not restrict itself to perturbation with bounded $\ell_\infty$ norm for this result to hold.

\section{Experiments on MNIST}
\label{apx:mnist}

We re-iterate our ImageNet experiments on MNIST. 
For this simpler task, \citet{madry2017towards} show that training on 
iterative attacks conveys robustness to white-box attacks with bounded $\ell_\infty$ norm.
Our goal is not to attain similarly strong white-box robustness on MNIST, but to show that our observations on limitations of single-step adversarial training, extend to other datasets than ImageNet.

\subsection{A Note on \texorpdfstring{$\ell_\infty$}{infinity-norm} Robustness on MNIST}
\label{apx:mnist-robustness}

The MNIST dataset is a simple baseline for assessing 
the potential of a defense, but the obtained 
results do not always generalize to harder tasks. We suggest that this is because 
achieving robustness to $\ell_\infty$ perturbations admits a simple ``closed-form'' solution, 
given the near-binary nature of the data.
Indeed, for an average MNIST image, over $80\%$ of the pixels are in $\{0, 1\}$ and 
only $6\%$ are in the range $[0.2, 0.8]$.
Thus, for a perturbation with $\epsilon \leq 0.3$, binarized versions of $x$ and $x^{\adv}$ can differ in at most $6\%$ of the input dimensions. By binarizing the inputs of a standard CNN trained \emph{without adversarial training}, we obtain a model that enjoys robustness similar to the model trained by~\citet{madry2017towards}. Concretely, for a white-box I-FGSM attack, we get at most $11.4\%$ error.

The existence of such a simple robust representation begs the question of why learning a robust model with adversarial training takes so much effort. Finding techniques to improve the performance of adversarial training, even on simple tasks, could provide useful insights for more complex tasks such as ImageNet, where we do not know of a similarly simple ``denoising'' procedure.  

These positive results on MNIST for the $\ell_\infty$ norm also leave open the question of defining a general norm for adversarial examples. Let us motivate the need for such a definition: we find that if we first \emph{rotate} an MNIST digit by $20\degree$, and then use the I-FGSM, our rounding model and the model from~\citet{madry2017towards} achieve only $65\%$ accuracy (on ``clean'' rotated inputs, the error is $<5\%$). If we further randomly ``flip'' $5$ pixels per image, the accuracy of both models drops to under $50\%$. Thus, we successfully evade the model by slightly extending the threat model (see Figure~\ref{fig:mnist-rot}).

\begin{figure}[t]
	\centering
	\includegraphics[width=0.4\textwidth]{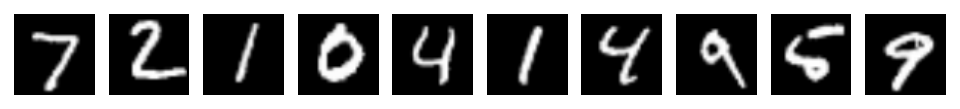}\\
	\vspace{-0.35em}
	\includegraphics[width=0.4\textwidth]{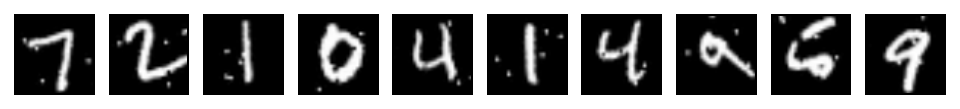}\\
	\vspace{-0.35em}
	\includegraphics[width=0.4\textwidth]{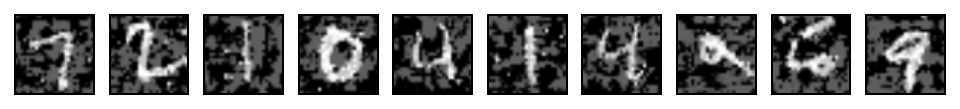}\\
	\vspace{-0.65em}
	\caption{\textbf{Adversarial Examples on MNIST.} (top) clean examples. (middle) inputs are rotated by $20\degree$ and $5$ random pixels are flipped. (bottom) The I-FGSM with $\epsilon=0.3$ is applied.}
	\label{fig:mnist-rot}
	\vspace{-1em}
\end{figure}

Of course, we could augment the training set with such perturbations (see~\citet{engstrom2017rotation}). \emph{An open question is whether we can enumerate all types of ``adversarial'' perturbations.}
In this work, we focus on the $\ell_\infty$ norm to illustrate our findings on the limitations of single-step adversarial training on ImageNet and MNIST, and to showcase the benefits of our Ensemble Adversarial Training variant. Our approach can easily be extended to consider multiple perturbation metrics. We leave such an evaluation to future work.

\subsection{Results}
\label{apx:mnist-results}

We repeat experiments from Section~\ref{sec:experiments} on MNIST.
We use the architectures in Table~\ref{table:mnist_archis}. We train a standard model for $6$ epochs, and 
an adversarial model with the FGSM ($\epsilon=0.3$) for 12 epochs.

\begin{table}[b]
	\vspace{-0.75em}
	\caption{\textbf{Neural network architectures used in this work for 
			the MNIST dataset.} 
		Conv: convolutional layer, FC: fully connected layer.\\[-1.5em]}
	\centering
	\begin{tabular}{@{} c c c c@{}c @{}}
		\toprule
		A & B & C & D\\
		\midrule
		Conv(64, 5, 5) + Relu & Dropout(0.2) & Conv(128, 3, 3) + Tanh & \multirow{2}{*}{$\left[\shortstack[c]{FC(300) + Relu \\ Dropout(0.5) \\[-1.2em]}\right]$} & \multirow{2}{*}{$\times 4$} \\
		Conv(64, 5, 5) + Relu & Conv(64, 8, 8) + Relu & MaxPool(2,2) &  \\
		Dropout(0.25) & Conv(128, 6, 6) + Relu &  Conv(64, 3, 3) + Tanh & FC + Softmax \\
		FC(128) + Relu & Conv(128, 5, 5) + Relu & MaxPool(2,2)\\
		Dropout(0.5) & Dropout(0.5) & FC(128) + Relu \\
		FC + Softmax & FC + Softmax & FC + Softmax \\
		\midrule
	\end{tabular}
	\label{table:mnist_archis}
\end{table}

During adversarial training, we avoid the \emph{label leaking} effect described by~\citet{kurakin2016scale} by using the model's predicted class $\argmax h(x)$ instead of the true label $y_{\text{true}}$ in the FGSM, 

We first analyze the ``degenerate'' minimum of adversarial training, described in Section~\ref{ssec:unwarranted-min}. 
For each trained model, we compute the \emph{approximation-ratio} of the FGSM for the inner maximization problem in equation~\eqref{eq:minmax}. That is, we compare the loss produced by the FGSM with the loss of a strong iterative attack. The results appear in Table~\ref{tab:approx-mnist}. As we can see, for all model architectures, adversarial training degraded the quality of a linear approximation to the model's loss. 

\begin{table}[t]
	\caption{\textbf{Approximation ratio between optimal loss and loss induced by single-step attack on MNIST.} Architecture B' is the same as B without the input dropout layer.\\[-1.5em]}
	\centering
	\begin{tabular}{@{}c c @{\hskip 0.3in} c c @{\hskip 0.3in} c c @{\hskip 0.3in} c c @{\hskip 0.3in} c c@{}}
		A 	 & A\uadv & B      & B\uadv & B$^*$ 	 & B$^*_{\adv}$& C      & C\uadv & D      & D\uadv \\
		\toprule
		$17\%$ & $0\%$ & $25\%$ & $8\%$ & $23\%$ & $1\%$ & $25\%$ & $0\%$ & $49\%$ & $16\%$ \\
		\bottomrule
	\end{tabular}
	\label{tab:approx-mnist}
	\vspace{-1em}
\end{table}

We find that \emph{input dropout}~\citep{srivastava2014dropout} (i.e., randomly dropping a fraction of input features during training) as used in architecture B limits this unwarranted effect of adversarial training.\footnote{We thank Arjun Bhagoji, Bo Li and Dawn Song for this observation.} If we omit the input dropout (we call this architecture B$^*$) the single-step attack degrades significantly. We discuss this effect in more detail below. For the fully connected architecture D, we find that the learned model is very close to linear and thus also less prone to the degenerate solution to the min-max problem, as we postulated in Section~\ref{ssec:unwarranted-min}. 

\paragraph{Attacks.}
\vspace{-0.05in}
Table~\ref{tab:attacks-mnist} compares error rates of undefended and adversarially trained models on white-box and black-box attacks, as in Section~\ref{ssec:attacks-v3}. Again, model B presents an anomaly. For all other models, we corroborate our findings on ImageNet for adversarial training: (1) black-box attacks trump white-box single-step attacks; (2) white-box single-step attacks are significantly stronger if prepended by a random step. 
For model B\uadv, the opposite holds true. We believe this is because input dropout increases diversity of attack samples similarly to Ensemble Adversarial Training.

\begin{table}[h!]
	\caption{\textbf{White-box and black-box attacks against standard and adversarially trained models.} For each model, the strongest single-step white-box and black box attacks are marked in bold.\\[-1.5em]}
	\centering
	\begin{tabular}{@{}l @{\hskip 0.1in} r c @{}r @{\hskip 0.2in}r r r r r@{}}
		& \multicolumn{2}{c}{\textbf{white-box}} & & \multicolumn{5}{c}{\textbf{black-box}} \\
		\cmidrule{2-3} \cmidrule{5-9}
		&	FGSM 	& R+FGSM	& & FGSM$_\text{A}$ & FGSM$_\text{B}$ & FGSM$_\text{B*}$	& FGSM$_\text{C}$	& FGSM$_\text{D}$ \\
		\toprule
		A & $64.7$ & $\mathbf{69.7}$ && - & $\mathbf{61.5}$ & $ 53.2$ & $46.8$ & $41.5$ \\
		A\uadv 	& $2.2$  & $\mathbf{14.8}$ && $6.6$  & $\mathbf{10.7}$ & $8.8$  & $6.5$  & $8.3$  \\
		\midrule
		B & $85.0$ & $\mathbf{86.0}$ && $45.7$ & - & $69.9$ & $59.9$ & $\mathbf{85.9}$\\
		B\uadv 	& $\mathbf{11.6}$ & $11.1$ && $6.4$  & $\mathbf{8.9}$  & $8.5$  & $4.9$  & $6.1$  \\
		\midrule
		B$^*$ & $\mathbf{75.7}$ & ${74.1}$ && $44.3$ & $\mathbf{72.8}$ & - & $46.0$ & $62.6$\\
		B$^*_{\adv}$ & $4.3$  & $\mathbf{40.6}$ && $16.1$ & $14.7$ & $15.0$ & $\mathbf{17.9}$ & $9.1$ \\
		\midrule
		C & $\mathbf{81.8}$ & $\mathbf{81.8}$ && $40.2$ & $55.8$ & $49.5$ & - &  $\mathbf{59.4}$\\
		C\uadv 	& $3.7$  & $\mathbf{17.1}$ && $9.8$  & $\mathbf{29.3}$ & $21.5$ & $11.9$ & $21.9$\\
		\midrule
		D & $92.4$ & $\mathbf{95.4}$ && $61.3$ & $\mathbf{74.1}$ & $68.9$ & $65.1$ & - \\
		D\uadv 	& $25.5$ & $\mathbf{47.5}$ && $\mathbf{32.1}$ & $30.5$ & $29.3$ & $28.2$ & $21.8$ \\
		\bottomrule
	\end{tabular}
	\label{tab:attacks-mnist}
	\\[-0.25em]
\end{table}

While training with input dropout helps avoid the degradation of the single-step attack, it also significantly delays convergence of the 
model. Indeed, model B\uadv retains relatively high error on white-box FGSM examples. Adversarial training with input dropout can be seen as comparable to training with a randomized single-step attack, as discussed in Section~\ref{ssec:rand-fgsm}.

The positive effect of input dropout is architecture and dataset specific: Adding an input dropout layer to models A, C and D confers only marginal benefit, and is outperformed by Ensemble Adversarial Training, discussed below. Moreover, \citet{mishkin2017systematic} find that input dropout significantly degrades accuracy on ImageNet. We thus did not incorporate it into our models on ImageNet.

\paragraph{Ensemble Adversarial Training.}
\vspace{-0.05in}
To evaluate Ensemble Adversarial Training~\ref{ssec:eat}, we train two models per architecture.
The first, denoted [A-D]\uadvens, uses a single pre-trained model of the same type (i.e., A\uadvens is trained on perturbations from another model A). The second model, denoted [A-D]\uadvensmulti, uses $3$ pre-trained models ($\{A, C, D\}$ or $\{B, C, D\}$). We train all models for $12$ epochs.

We evaluate our models on black-box attacks crafted on models A,B,C,D (for a fair comparison, we do not use the same pre-trained models for evaluation, but retrain them with different random seeds).
The attacks we consider are the FGSM, I-FGSM and the PGD attack from~\citet{madry2017towards} with the loss function from~\citet{carlini2016towards}), all with $\epsilon = 0.3$. The results appear in Table~\ref{tab:eat-mnist}. For each model, we report the worst-case and average-case error rate over all black-box attacks.

\begin{table}[t]
	\caption{\textbf{Ensemble Adversarial Training on MNIST.} 
	For black-box robustness, we report the maximum and average error rate over a suite of $12$ attacks, comprised of the FGSM, I-FGSM and	PGD~\citep{madry2017towards} attacks applied to models A,B,C and D. We use $\epsilon=16$ in all cases.
	For each model architecture, we mark the models tied for best (at a $95\%$ confidence level) in bold.\\[-1.5em]}
	\centering
	\begin{tabular}{@{}l r r r r}
			& \textbf{Clean} & \textbf{FGSM} & \textbf{Max.~Black Box} & \textbf{Avg.~Black Box}\\
		\toprule
		A\uadv				& \textbf{0.8} 	& \textbf{2.2} 	& 10.8 			& 7.7\\
		A\uadvens			& \textbf{0.8} 	& 7.0 			& \textbf{6.6} 	& {5.2}\\
		A\uadvensmulti		& \textbf{0.7} 	& 5.4 			& \textbf{6.5}	& \textbf{4.3}\\
		\midrule
		B\uadv				& \textbf{0.8} 	& \textbf{11.6} & {8.9} 		& \textbf{5.5}\\
		B\uadvens			& \textbf{0.7} 	& \textbf{10.5} & \textbf{6.8} 	& \textbf{5.3}\\
		B\uadvensmulti		& \textbf{0.8} 	& 14.0 			& 8.8			& \textbf{5.1}\\
		\midrule
		C\uadv				& \textbf{1.0} 	& 3.7 			& 29.3 			& 18.7\\
		C\uadvens			& \textbf{1.3} 	& \textbf{1.9} 	& {17.2} 		& {10.7}\\
		C\uadvensmulti		& \textbf{1.4}  & 3.6 			& \textbf{14.5} & \textbf{8.4}\\
		\midrule
		D\uadv				& \textbf{2.6} & 25.5 			& 32.5 			& 23.5\\
		D\uadvens			& \textbf{2.6} & \textbf{21.5} 	& 38.6 			& 28.0\\
		D\uadvensmulti		& \textbf{2.6} & 29.4 			& \textbf{29.8} & \textbf{15.6}\\
		\bottomrule
	\end{tabular}
	\label{tab:eat-mnist}
	\\[-1em]
\end{table}

Ensemble Adversarial Training significantly increases robustness to black-box attacks, except for architecture B, which we previously found to not suffer from the same overfitting phenomenon that affects the other adversarially trained networks. Nevertheless, model B\uadvens achieves slightly better robustness to white-box and black-box attacks than B\uadv. In the majority of cases, we find that using a single pre-trained model produces good results, but that the extra diversity of including three pre-trained models can sometimes increase robustness even further.
Our experiments confirm our conjecture that robustness to black-box attacks generalizes across models. Indeed, we find that when training with three external models, we attain very good robustness against attacks initiated from models with the same architecture (as evidenced by the average error on our attack suite), but also increased robustness to attacks initiated from the fourth holdout model 

\section{Transferability of Randomized Single-Step Perturbations.}

In Section~\ref{ssec:rand-fgsm}, we introduced the R+Step-LL attack,
an extension of the Step-LL method that prepends the attack with a small 
random perturbation. In Table~\ref{table:rand-fgsm-transfer-imagenet}, we evaluate 
the transferability of R+Step-LL adversarial examples on ImageNet. 
We find that the randomized variant produces perturbations that transfer 
at a much lower rate (see Table~\ref{table:black-box} for the deterministic variant). 

\begin{table}[h!]
	\caption{\textbf{Error rates (in $\%$) of randomized single-step attacks
		transferred between models on ImageNet.}
		We use R+Step-LL with $\epsilon={16}/{256}, \alpha={\epsilon}/{2}$ for $10{,}000$
		random test set samples. The white-box attack always outperforms black-box attacks.\\[-1.5em]}
	\centering
	\begin{subfigure}{0.48\textwidth}
		\centering
		\begin{tabular}{@{}p{1cm} r r r r r @{}}
			& \multicolumn{5}{c}{\textbf{Source}} \\
			\textbf{Target} & v4 & v3 & v3$_{\text{adv}}$ & IRv2 & IRv2$_{\text{adv}}$\\
			\toprule
			v4 & \bfs 70.5 & 37.2 & 23.2 & 34.0 & 24.6 \\
			v3 & 42.6 & \bfs 80.0 & 26.7 & 38.5 & 27.6 \\
			v3$_{\text{adv}}$ & 31.4 & 30.7 & \bfs 64.8 & 30.4 & 34.0 \\
			IRv2 & 36.2 & 35.7 & 23.0 & \bfs 56.3 & 24.6 \\
			IRv2$_{\text{adv}}$ & 26.8 & 26.3 & 25.2 & 26.9 & \bfs 37.5 \\
			\bottomrule
		\end{tabular}
		\caption*{\textbf{Top 1}}
	\end{subfigure}
	\hfill
	\begin{subfigure}{0.48\textwidth}
		\centering
		\begin{tabular}{@{}p{1cm} r r r r r @{}}
			& \multicolumn{5}{c}{\textbf{Source}} \\
			\textbf{Target} & v4 & v3 & v3$_{\text{adv}}$ & IRv2 & IRv2$_{\text{adv}}$\\
			\toprule
			v4 & \bfs 42.8 & 14.3 & 6.3 & 11.9 & 6.9 \\
			v3 & 18.0 & \bfs 57.1 & 8.0 & 15.6 & 8.6 \\
			v3$_{\text{adv}}$ & 10.7 & 10.4 & \bfs 37.1 & 10.1 & 12.9 \\
			IRv2 & 12.8 & 13.6 & 6.1 & \bfs 29.3 & 7.0 \\
			IRv2$_{\text{adv}}$ & 8.0 & 8.0 & 7.7 & 8.3 & \bfs 15.0 \\
			\bottomrule
		\end{tabular}
		\caption*{\textbf{Top 5}}
	\end{subfigure}
	\label{table:rand-fgsm-transfer-imagenet}
	\vspace{-1.5em}
\end{table}

\section{Gradient Aligned Adversarial Subspaces for the \texorpdfstring{$\ell_\infty$ Norm}{Infinity-Norm}}
\label{apx:gaas}

\citet{tramer2017space} consider the following task for a given model $h$: for a (correctly classified) point $x$, find $k$ orthogonal vectors $\{r_1, \dots, r_k\}$ such that $\|r_i\|_2 \leq \epsilon$ and all the $x+r_i$ are adversarial (i.e., $\argmax h(x+r_i) \neq y_{\text{true}}$).
By linearizing the model's loss function, this reduces to finding $k$ orthogonal vectors $r_i$ that are maximally aligned with the model's gradient $g = \nabla_x L(h(x), y_\text{true})$. \citet{tramer2017space} left a construction for the $\ell_\infty$ norm as an open problem.

We provide an optimal construction for the $\ell_\infty$ norm, based on \emph{Regular Hadamard Matrices}~\citep{colbourn2010crc}. Given the $\ell_\infty$ constraint, we find orthogonal vectors $r_i$ that are maximally aligned with the \emph{signed} gradient, $\texttt{sign}(g)$.
We first prove an analog of \citep[Lemma 1]{tramer2017space}.

\begin{lemma}
\label{lemma:gaas}
Let $v \in \{-1, 1\}^d$ and $\alpha \in (0, 1)$. Suppose there are k orthogonal vectors
$r_1, \dots r_n \in \{-1, 1\}^d$ satisfying $v^\top r_i  \geq \alpha \cdot d$. Then $\alpha \leq k^{-\frac12}$.
\end{lemma}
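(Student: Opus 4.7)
The plan is to reduce the statement to a one-line application of Bessel's inequality (equivalently, Parseval's identity together with positive semidefiniteness). Every vector in sight has a fixed $\ell_2$ norm: since $v \in \{-1,1\}^d$ we have $\|v\|_2^2 = d$, and similarly $\|r_i\|_2^2 = d$ for each $i$. So the $r_i$ are not just orthogonal, they are equi-normed, which is exactly the setup where Bessel's inequality gives a tight bound.

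Concretely, I would first normalize and set $\hat{r}_i \coloneqq r_i / \sqrt{d}$, so that $\{\hat{r}_1, \dots, \hat{r}_k\}$ is an orthonormal set in $\mathbb{R}^d$. Bessel's inequality then yields
\begin{equation*}
\sum_{i=1}^k \langle v, \hat{r}_i \rangle^2 \;\leq\; \|v\|_2^2 \;=\; d,
\end{equation*}
which, after multiplying through by $d$, becomes $\sum_{i=1}^k (v^\top r_i)^2 \leq d^2$. Plugging in the hypothesis $v^\top r_i \geq \alpha d > 0$ gives $k \cdot \alpha^2 d^2 \leq d^2$, and hence $\alpha \leq k^{-1/2}$.

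There is no real obstacle here; the only thing to be careful about is noting that the lower bound $v^\top r_i \geq \alpha d$ is positive (since $\alpha \in (0,1)$), so squaring preserves the inequality and the sum of squares lower bound $k \alpha^2 d^2$ is legitimate. As a sanity check for tightness (relevant to the subsequent claim in the paper that the bound is tight), the construction based on a Regular Hadamard Matrix of order $d$ with $v = \texttt{sign}(g)$ saturates Bessel's inequality exactly when $k = d$ with $\alpha = 1/\sqrt{d}$, and by taking appropriate subsets of rows one obtains the tight examples for general $k$.
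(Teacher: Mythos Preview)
Your proof is correct and essentially identical to the paper's: both normalize the $r_i$ to an orthonormal set and apply Bessel's inequality to $v$, then use $v^\top r_i \geq \alpha d > 0$ to lower-bound the sum of squares by $k\alpha^2 d$. The only minor caveat is your closing tightness remark: taking a subset of $k$ rows from a Regular Hadamard matrix of order $d$ gives $\alpha = d^{-1/2}$, not $k^{-1/2}$; the paper instead repeats each row of a Regular Hadamard matrix of order $k$ a total of $d/k$ times (and multiplies by $v$) to achieve $\alpha = k^{-1/2}$.
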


\begin{proof}
	Let $\hat{r_i} = \frac{r_i}{\|r_i\|_2} = \frac{r_i}{\sqrt{d}}$. Then, we have
	\begin{equation}
	d = \|v\|_2^2 \geq \sum_{i=1}^k |v^\top \hat{r_i}|^2 = d^{-1} \sum_{i=1}^k |v^\top r_i|^2 \geq d^{-1} \cdot k \cdot (\alpha \cdot d)^2 = k \cdot \alpha^2 \cdot d \;,
	\end{equation}
	from which we obtain $\alpha \leq k^{-\frac12}$.
\end{proof}

This result bounds the number of orthogonal perturbations we can expect to find, for a given alignment with the signed gradient.
As a warm-up consider the following trivial construction of $k$ orthogonal vectors in $\{-1, 1\}^d$ that are ``somewhat'' aligned with $\texttt{sign}(g)$. We split $\texttt{sign}(g)$ into $k$ ``chunks'' of size $\frac{d}{k}$ and define $r_i$ to be the vector that is equal to $\texttt{sign}(g)$ in the $i$\textsuperscript{th} chunk and zero otherwise. We obtain $\texttt{sign}(g)^\top r_i = \frac{d}{k}$, a factor $\sqrt{k}$ worse than the the bound in Lemma~\ref{lemma:gaas}.

We now provide a construction that meets this upper bound. We make use of Regular Hadamard Matrices of order $k$~\citep{colbourn2010crc}. These are square matrices $H_k$ such that: (1) all entries of $H_k$ are in $\{-1, 1\}^k$; (2) the rows of $H_k$ are mutually orthogonal; (3) All row sums are equal to $\sqrt{k}$. 

The order of a Regular Hadamard Matrix is of the form $4u^2$ for an integer $u$. We use known constructions for $k \in \{4, 16, 36, 64, 100\}$.

\begin{lemma}
	\label{lemma:construction}
	Let $g \in \Reals^d$ and $k$ be an integer for which a Regular Hadamard Matrix of order $k$ exists. Then, there is a randomized construction of $k$ orthogonal vectors $r_1, \dots r_n \in \{-1, 1\}^d$, such that $\texttt{sign}(g)^\top r_i = d \cdot k^{-\sfrac12}$. Moreover, $\Exp[g^\top r_i] = k^{-\sfrac12} \cdot \|g\|_1$.
\end{lemma}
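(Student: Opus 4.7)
The plan is to build each $r_i$ by partitioning the $d$ coordinates into $k$ equal-sized blocks and flipping $\texttt{sign}(g)$ block-by-block according to the $i$-th row of the regular Hadamard matrix $H_k$. Assuming $k \mid d$ for simplicity (one can pad with a negligible number of dummy coordinates otherwise), I would draw a uniformly random permutation $\pi$ of $[d]$ and use it to partition $[d]$ into $k$ blocks $B_1, \dots, B_k$ of size $d/k$. Then define $(r_i)_\ell \coloneqq H_k[i,j] \cdot \texttt{sign}(g_\ell)$ for every $\ell \in B_j$, with the convention $\texttt{sign}(0) \coloneqq 1$, which automatically yields $r_i \in \{-1,1\}^d$.

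The two deterministic claims would then follow by expanding inner products block by block and invoking the defining properties of $H_k$. For $i \neq i'$,
\[
r_i^\top r_{i'} = \sum_{j=1}^{k} H_k[i,j] H_k[i',j] \sum_{\ell \in B_j} \texttt{sign}(g_\ell)^2 = \frac{d}{k} \sum_{j=1}^{k} H_k[i,j] H_k[i',j] = 0,
\]
by the row-orthogonality of $H_k$. Similarly, the constant row sum $\sum_j H_k[i,j] = \sqrt{k}$ yields $\texttt{sign}(g)^\top r_i = \frac{d}{k} \cdot \sqrt{k} = d \cdot k^{-1/2}$.

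The randomness over $\pi$ enters only to prove the expectation bound. Writing $g^\top r_i = \sum_{j} H_k[i,j] \sum_{\ell \in B_j} |g_\ell|$, the uniformity of $\pi$ gives $\Pr[\ell \in B_j] = 1/k$ for every coordinate $\ell$, hence $\Exp\big[\sum_{\ell \in B_j} |g_\ell|\big] = \|g\|_1 / k$. Summing over $j$ and once more invoking the row-sum property produces $\Exp[g^\top r_i] = k^{-1/2} \cdot \|g\|_1$.

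There is no real obstacle: the construction is explicit and each claim reduces to one of the two defining properties of a regular Hadamard matrix (row orthogonality and a constant row sum of $\sqrt{k}$). The only point worth flagging is why randomization is needed at all: for a fixed block partition, the $\ell_1$ mass of $g$ can concentrate in an arbitrary block, so the signed sum $\sum_j H_k[i,j] \sum_{\ell \in B_j} |g_\ell|$ could be far from $k^{-1/2} \|g\|_1$; a uniformly random partition equalizes the expected per-block mass and recovers the desired bound.
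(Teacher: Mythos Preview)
Your proposal is correct and matches the paper's proof essentially step for step: both build $r_i$ by tiling the $i$th row of a regular Hadamard matrix across $d/k$-sized blocks, multiply coordinatewise by $\texttt{sign}(g)$, and then randomize over a permutation of the coordinates to equalize the expected per-block $\ell_1$ mass. Your write-up is in fact slightly more explicit than the paper's (you spell out the orthogonality computation and articulate why the randomization is necessary), but the underlying construction and argument are the same.
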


\begin{proof}
We construct $k$ orthogonal vectors $r_1, \dots, r_k \in \{-1, 1\}^d$, where $r_i$ is obtained by repeating the i\textsuperscript{th} row of $H_k$ $\sfrac{d}{k}$ times (for simplicity, we assume that $k$ divides $d$. Otherwise we pad $r_i$ with zeros). We then multiply each $r_i$ component-wise with $\texttt{sign}(g)$. By construction, the $k$ vectors $r_i \in \{-1, 1\}^d$ are mutually orthogonal, and we have $\texttt{sign}(g)^\top r_i = \frac{d}{k} \cdot \sqrt{k} = d \cdot k^{-\sfrac12}$, which is tight according to Lemma~\ref{lemma:gaas}.

As the weight of the gradient $g$ may not be uniformly distributed among its $d$ components, we apply our construction to a random permutation of the signed gradient. We then obtain 
\begin{align}
\Exp [g^\top r_i] &= \Exp \Big[ \sum_{j=1}^d |g^{(j)}| \cdot \texttt{sign}(g^{(j)}) \cdot r_i^{(j)}\Big]
\\
&= \sum_{j=1}^d |g^{(j)}| \cdot \Exp \left[ \texttt{sign}(g^{(j)}) \cdot r_i^{(j)}\right] = k^{-\sfrac12} \cdot \|g\|_1 \;.
\end{align}
\end{proof}

It can be shown that the bound in Lemma~\ref{lemma:construction} can be attained if and only if the $r_i$ are constructed from the rows of a Regular Hadamard Matrix~\citep{colbourn2010crc}. For general integers $k$ for which no such matrix exists, other combinatorial designs may be useful for achieving looser bounds.

\newpage
\section{Illustrations of Gradient Masking in Adversarial Training}
\label{apx:curvature}

In Section~\ref{ssec:unwarranted-min}, we show that 
adversarial training introduces spurious curvature artifacts in the 
model's loss function around data points. As a result, one-shot 
attack strategies based on first-order approximations of the model 
loss produce perturbations that are non-adversarial.
In Figures~\ref{fig:extra-masking-v3} and~\ref{fig:extra-masking} we show further 
illustrations of this phenomenon for the Inception v3\uadv model trained 
on ImageNet by~\citet{kurakin2016scale} as well as for the 
model A\uadv we trained on MNIST.

\begin{figure*}[ht]
	\centering
	\begin{subfigure}[b]{0.40\textwidth}
		\centering
		\includegraphics[width=\textwidth]{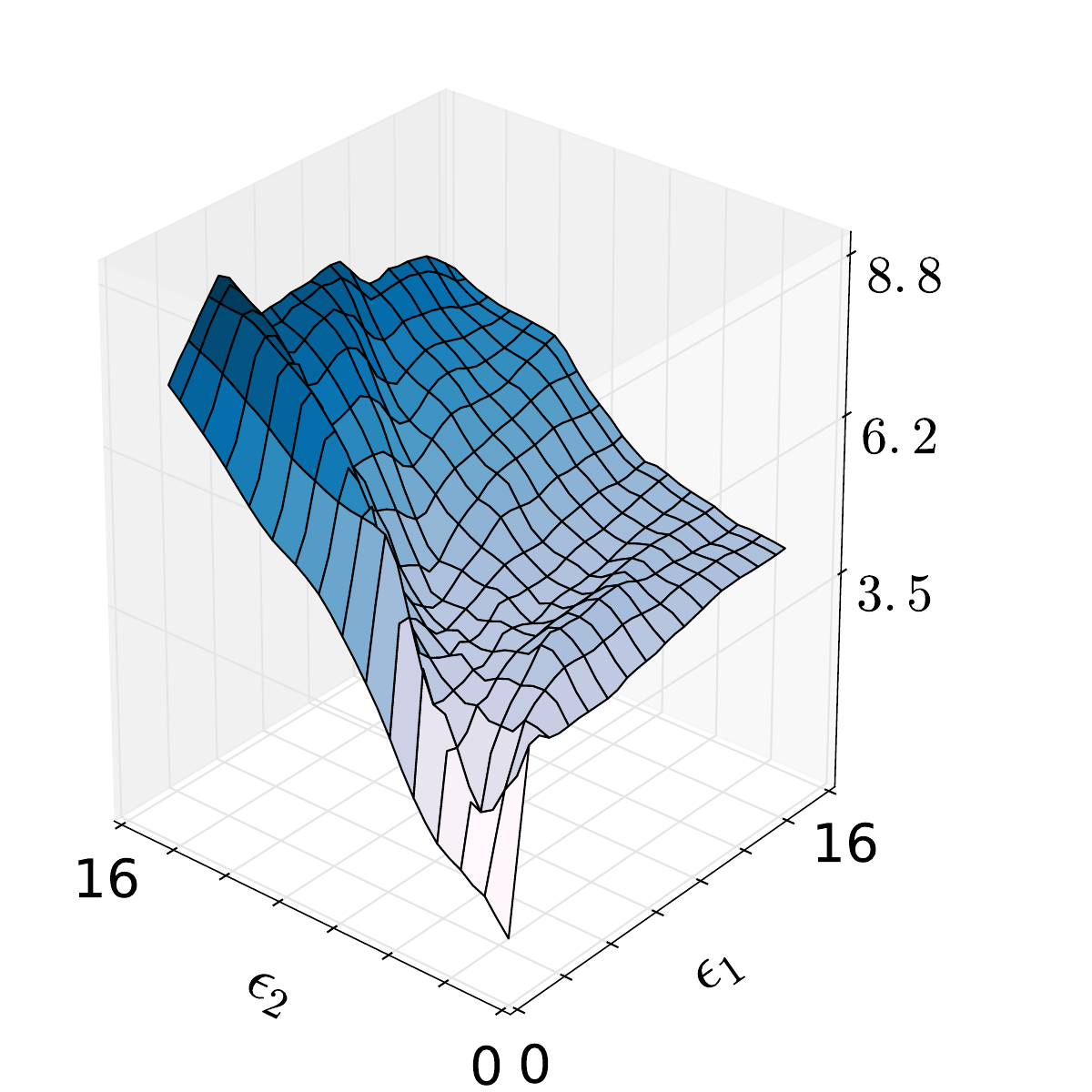}
	\end{subfigure}%
	~
	\begin{subfigure}[b]{0.40\textwidth}
		\centering
		\includegraphics[width=\textwidth]{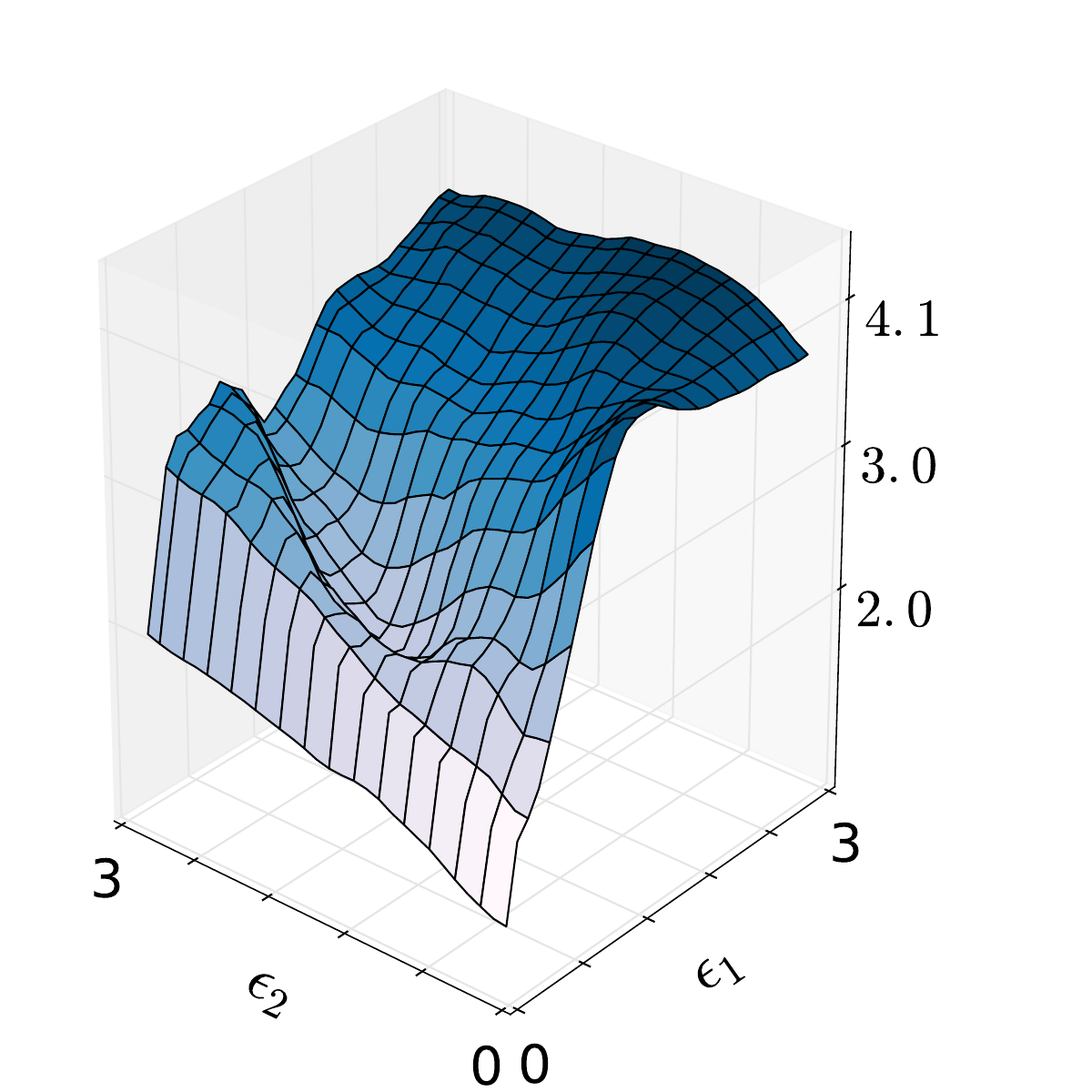}
	\end{subfigure}
	\vspace{-0.25em}
	\begin{subfigure}[b]{0.40\textwidth}
		\centering
		\includegraphics[width=\textwidth]{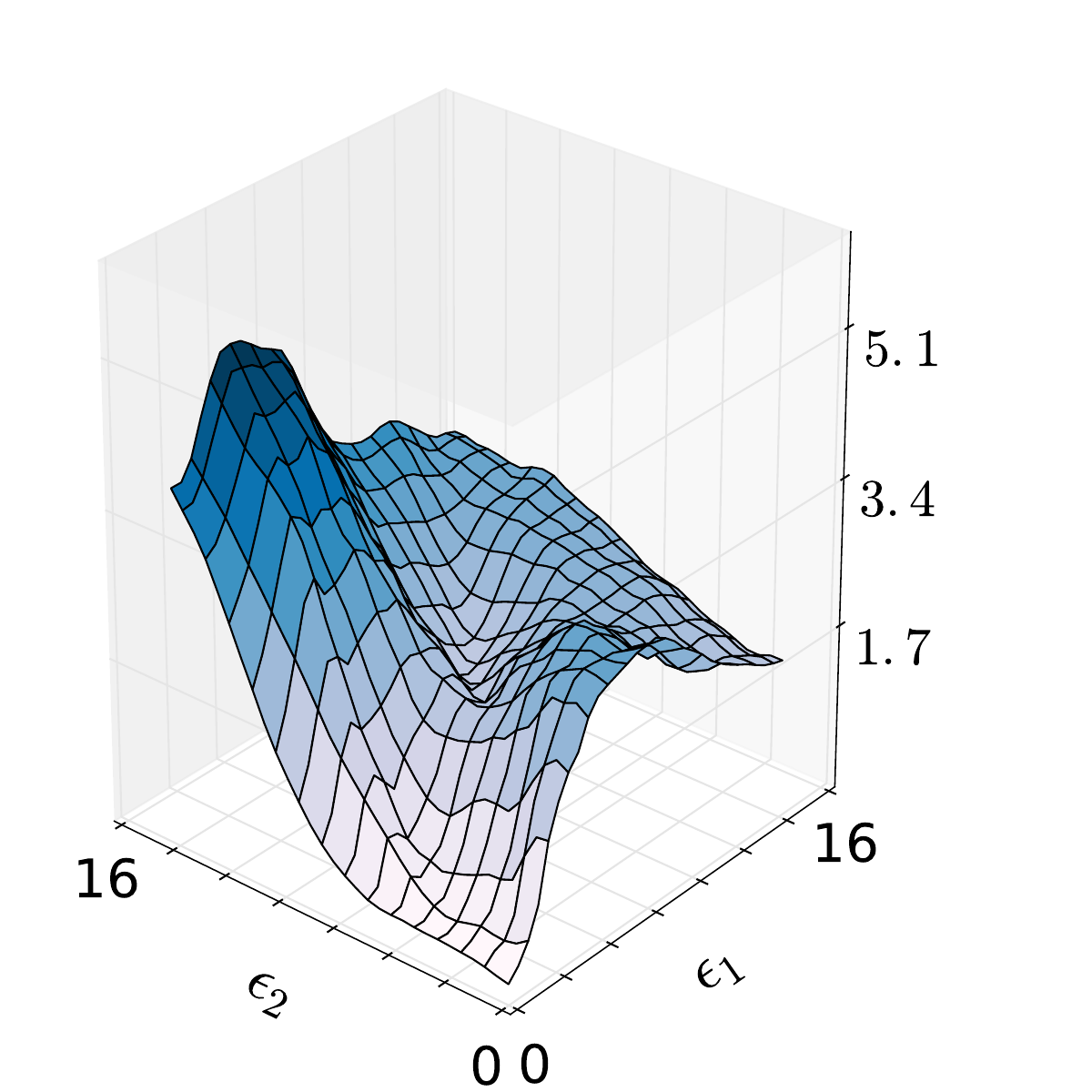}
	\end{subfigure}%
	~
	\begin{subfigure}[b]{0.40\textwidth}
		\centering
		\includegraphics[width=\textwidth]{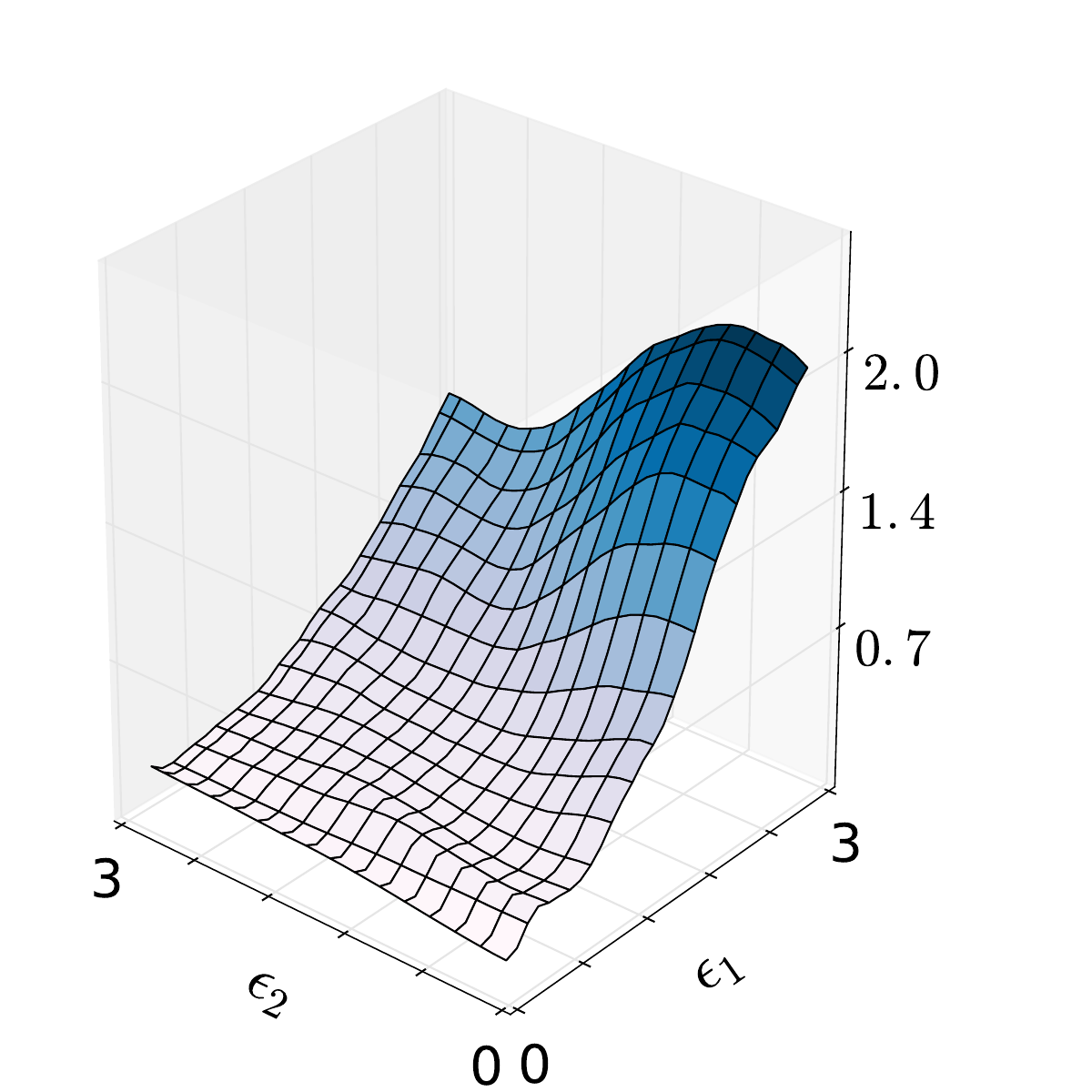}
	\end{subfigure}
	\vspace{-0.25em}
	\begin{subfigure}[b]{0.40\textwidth}
		\centering
		\includegraphics[width=\textwidth]{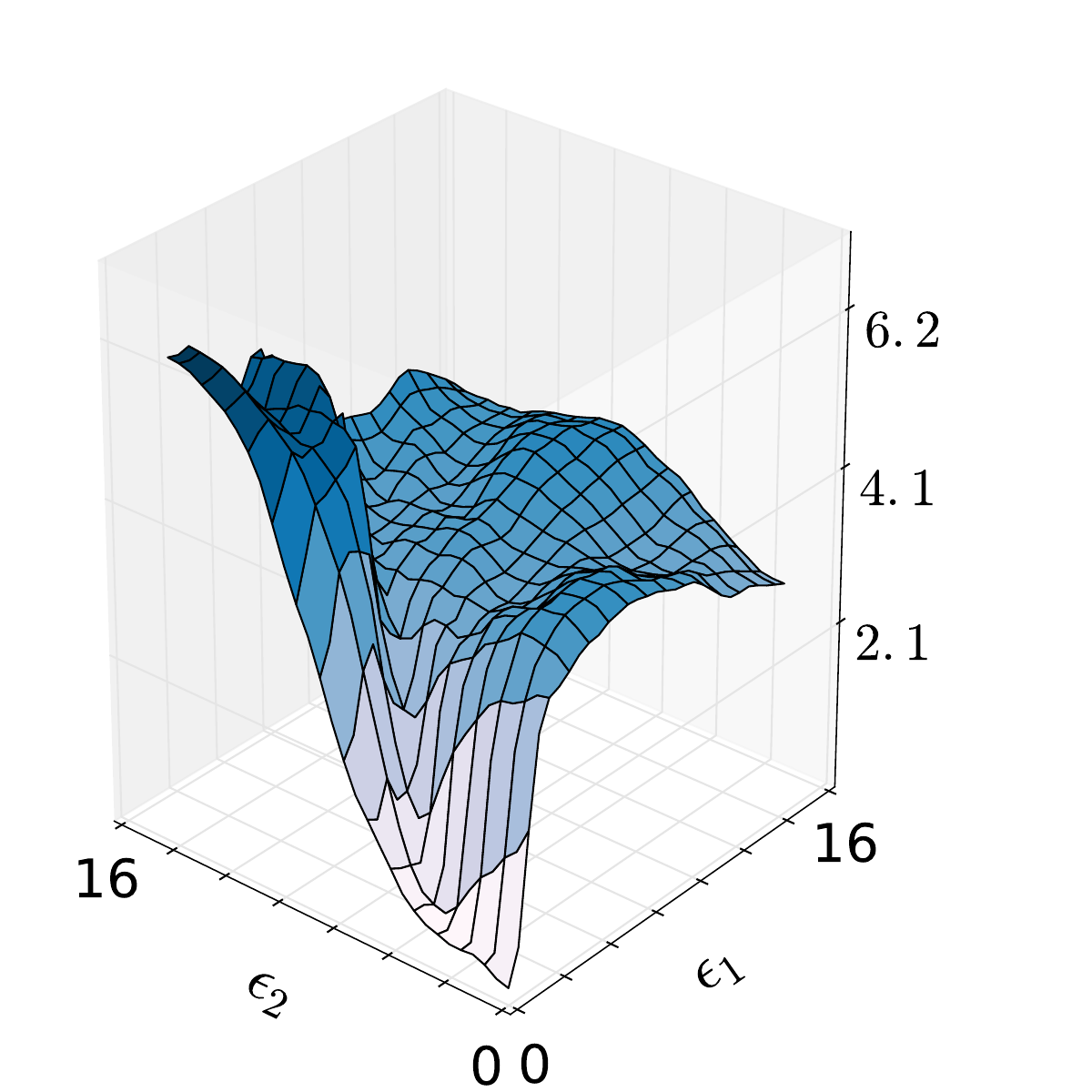}
		\caption{Loss surface for model v3\uadv.}
	\end{subfigure}%
	~
	\begin{subfigure}[b]{0.40\textwidth}
		\centering
		\includegraphics[width=\textwidth]{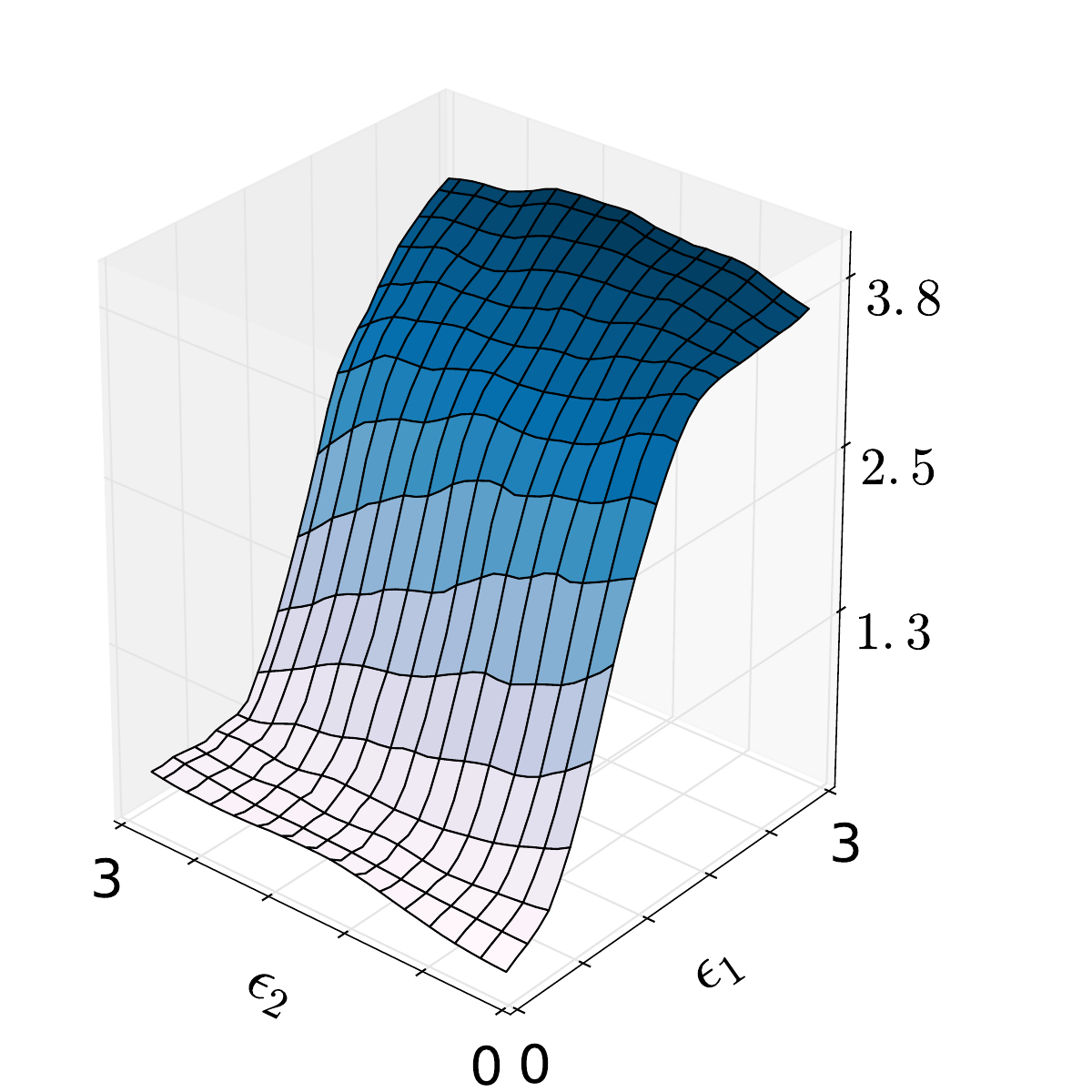}
		\caption{Zoom in of the loss for small $\epsilon_1, \epsilon_2$.}
	\end{subfigure}
	
	\caption{\textbf{Additional illustrations of the local curvature artifacts 
			introduced by adversarial training on ImageNet.}
		We plot the loss of model 
		v3\uadv on samples of the form $x^* = x + \epsilon_1 \cdot g + 
\epsilon_2 \cdot g^\bot$, 
where $g$ is the signed gradient of v3\uadv and 
$g^\bot$ is an orthogonal adversarial direction, obtained from
an Inception v4 model. The right-side plots are zoomed in versions of the left-side plots.\\[-0.5em]}
	\label{fig:extra-masking-v3}
\end{figure*}

\begin{figure*}[t]
	\centering
	\begin{subfigure}[b]{0.48\textwidth}
		\centering
		\includegraphics[width=\textwidth]{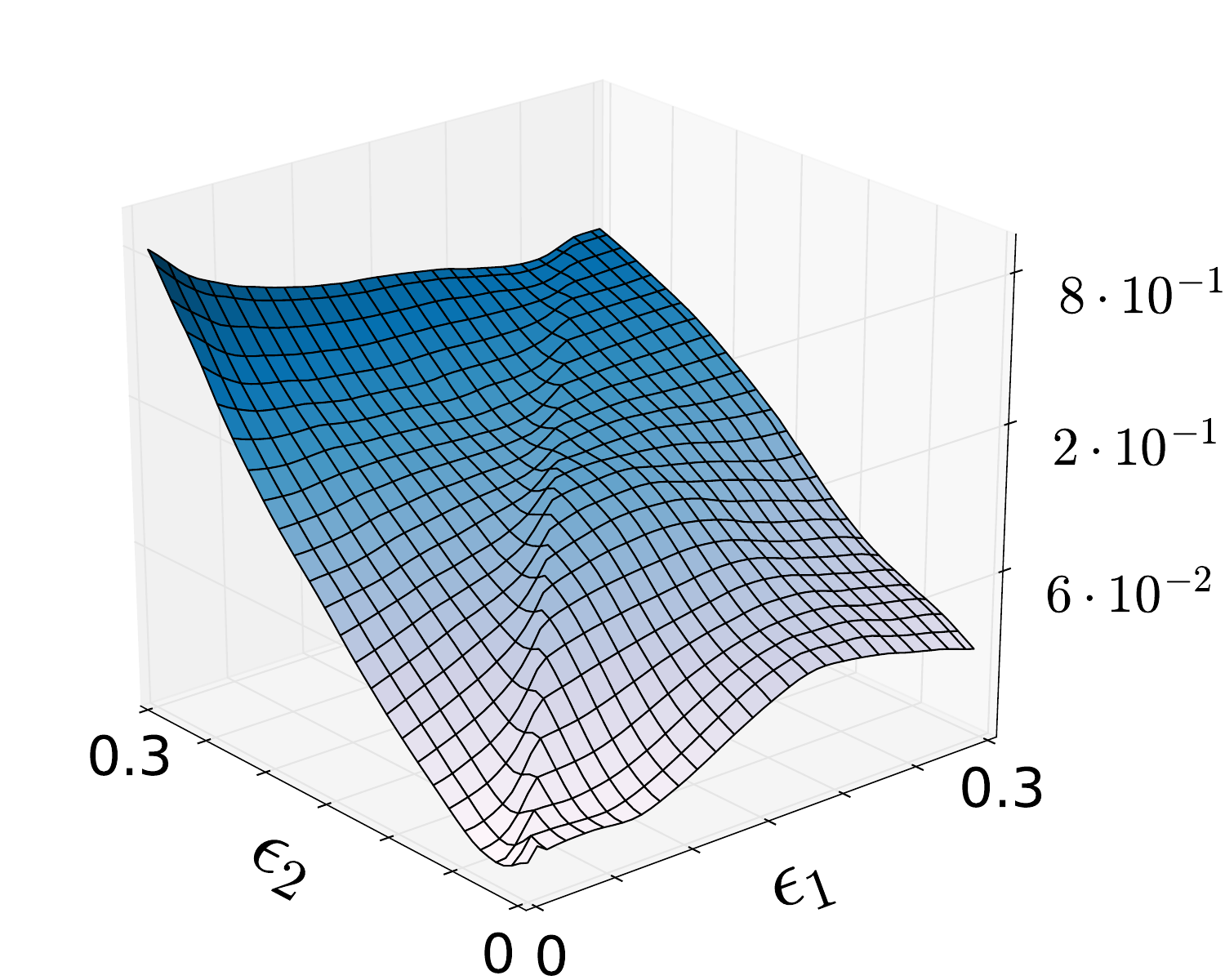}
	\end{subfigure}%
	~
	\begin{subfigure}[b]{0.48\textwidth}
		\centering
		\includegraphics[width=\textwidth]{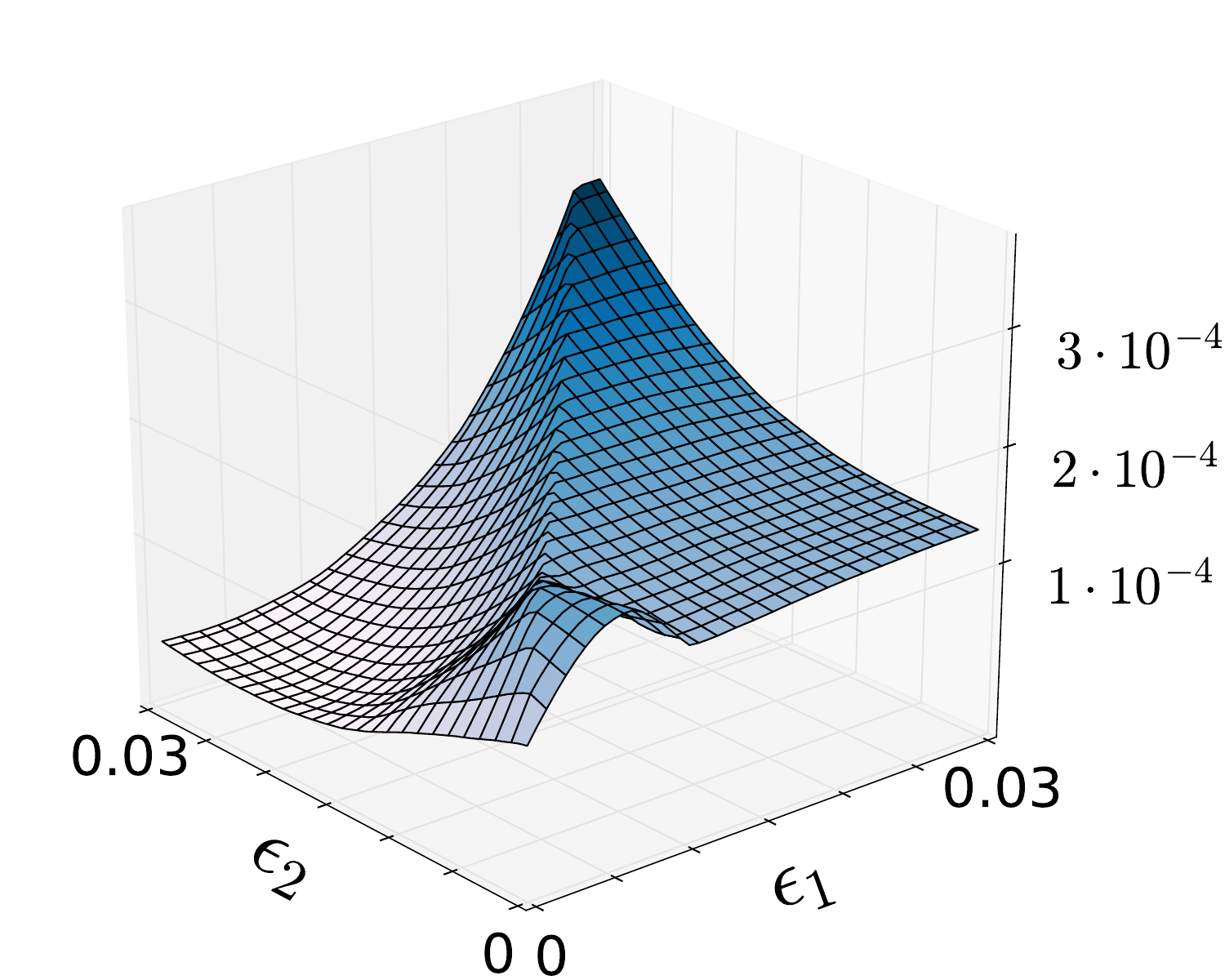}
	\end{subfigure}
	~
	\begin{subfigure}[b]{0.48\textwidth}
		\centering
		\includegraphics[width=\textwidth]{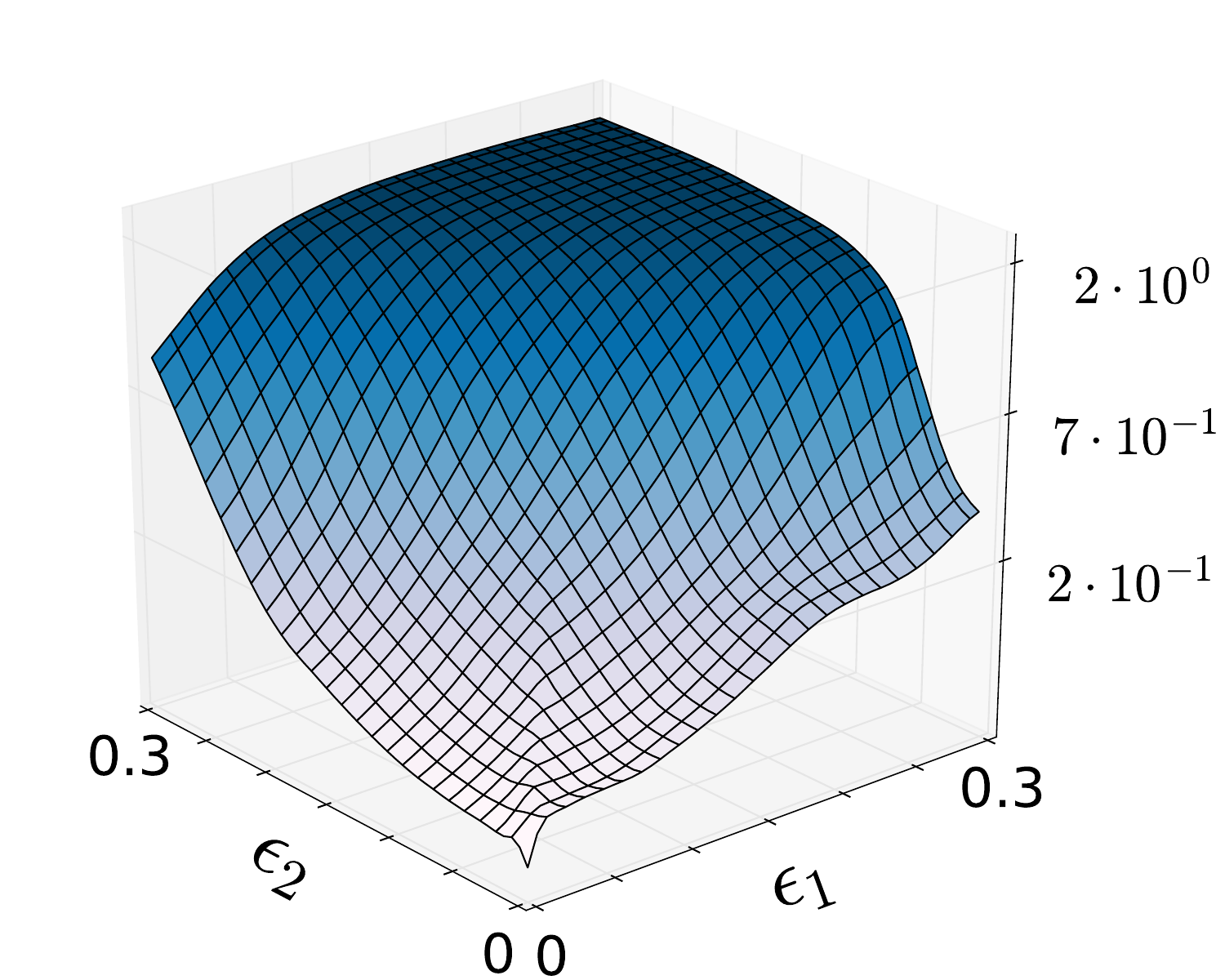}
	\end{subfigure}%
	~
	\begin{subfigure}[b]{0.48\textwidth}
		\centering
		\includegraphics[width=\textwidth]{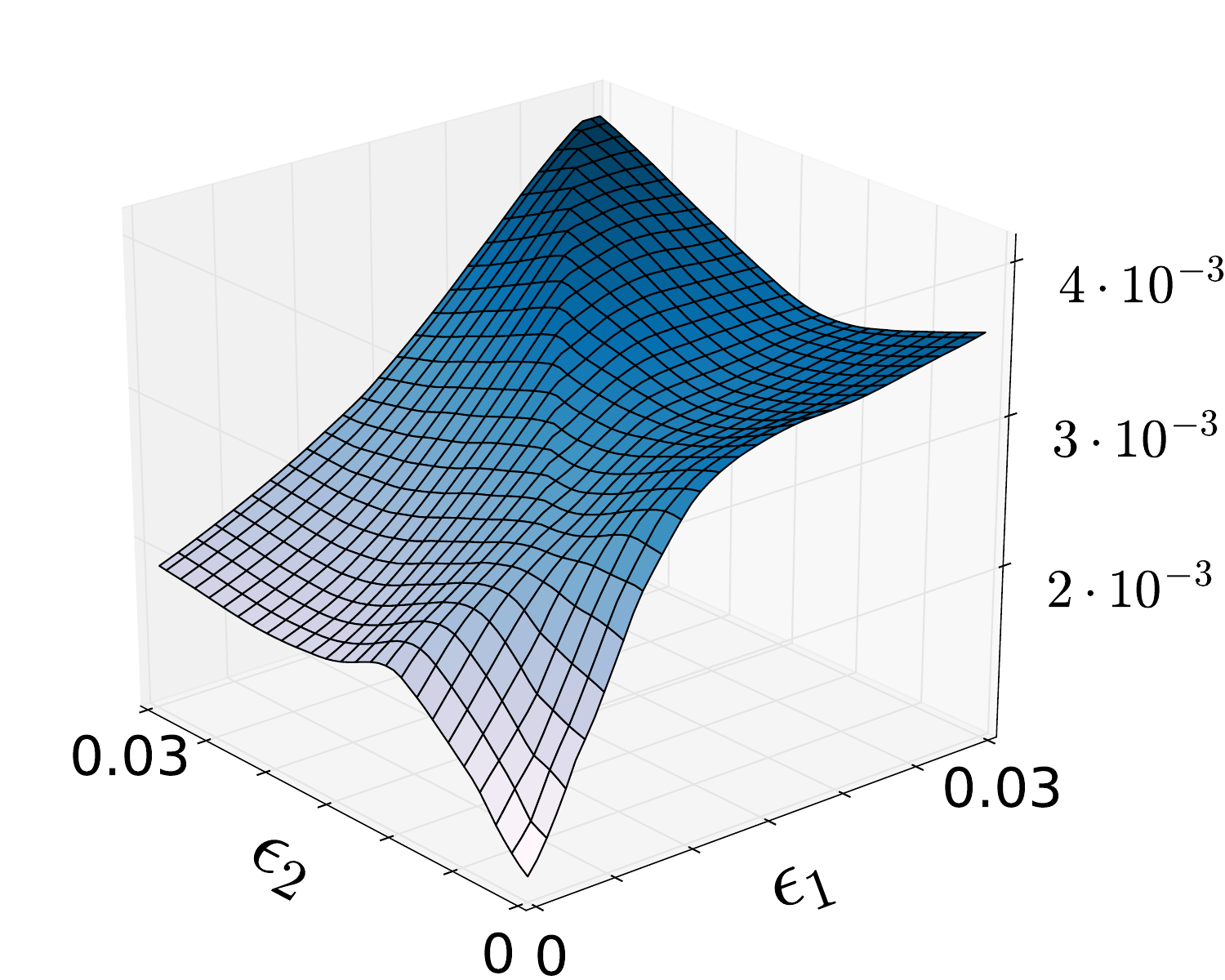}
	\end{subfigure}
	~
	\begin{subfigure}[b]{0.48\textwidth}
		\centering
		\includegraphics[width=\textwidth]{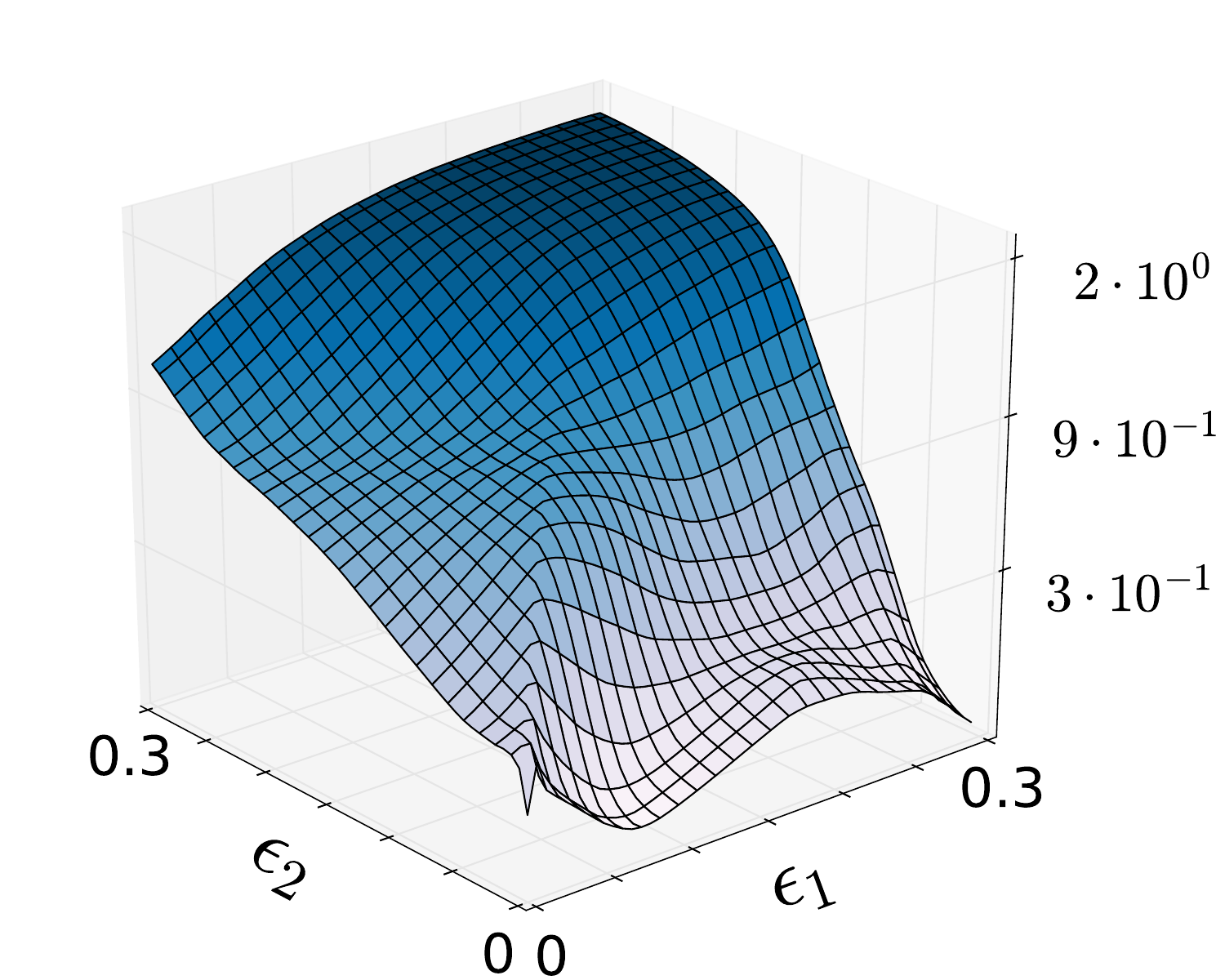}
		\caption{Loss surface for model A$_\text{adv}$ (log-scale).}
	\end{subfigure}%
	~
	\begin{subfigure}[b]{0.48\textwidth}
		\centering
		\includegraphics[width=\textwidth]{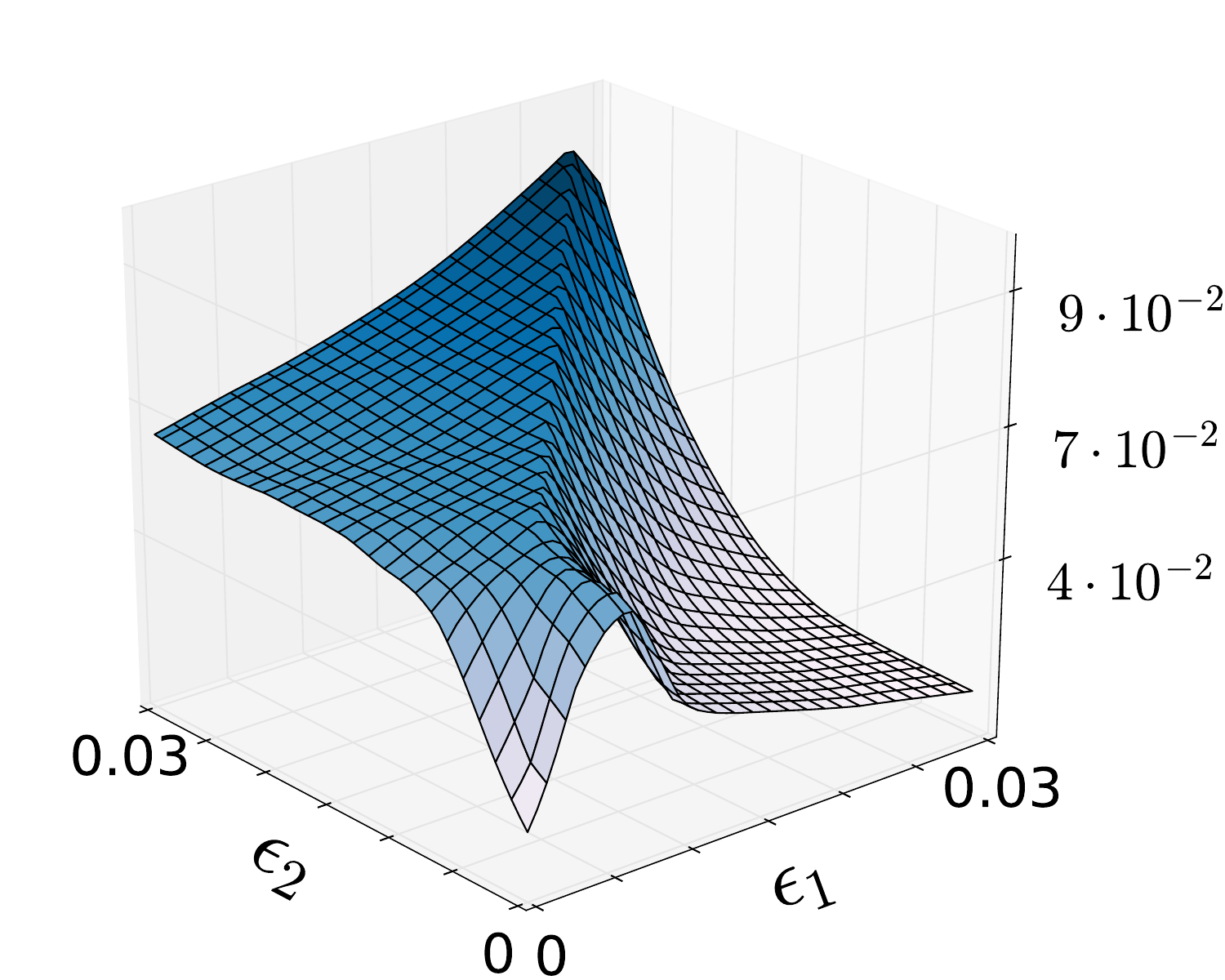}
		\caption{Zoom in of the loss for small $\epsilon_1, \epsilon_2$.}
	\end{subfigure}
	
	\caption{\textbf{Illustrations of the local curvature artifacts 
	introduced by adversarial training on MNIST.} 
	We plot the loss of model 
	A$_{\text{adv}}$ on samples of the 
	form $x^* = x + \epsilon_1 \cdot g + 
	\epsilon_2 \cdot g^\bot$, 
	where $g$ is the signed gradient of model A$_{\text{adv}}$ and 
	$g^\bot$ is an orthogonal adversarial direction, obtained from
	model B. The right-side plots are zoomed in versions of the left-side plots.}
	\label{fig:extra-masking}
\end{figure*}

\end{document}